%% 
%% Copyright 2007-2020 Elsevier Ltd
%% 
%% This file is part of the 'Elsarticle Bundle'.
%% ---------------------------------------------
%% 
%% It may be distributed under the conditions of the LaTeX Project Public
%% License, either version 1.2 of this license or (at your option) any
%% later version.  The latest version of this license is in
%%    http://www.latex-project.org/lppl.txt
%% and version 1.2 or later is part of all distributions of LaTeX
%% version 1999/12/01 or later.
%% 
%% The list of all files belonging to the 'Elsarticle Bundle' is
%% given in the file `manifest.txt'.
%% 
%% Template article for Elsevier's document class `elsarticle'
%% with harvard style bibliographic references

\documentclass[preprint,12pt,authoryear]{elsarticle}

%% Use the option review to obtain double line spacing
%% \documentclass[authoryear,preprint,review,12pt]{elsarticle}

%% Use the options 1p,twocolumn; 3p; 3p,twocolumn; 5p; or 5p,twocolumn
%% for a journal layout:
%% \documentclass[final,1p,times,authoryear]{elsarticle}
%% \documentclass[final,1p,times,twocolumn,authoryear]{elsarticle}
%% \documentclass[final,3p,times,authoryear]{elsarticle}
%% \documentclass[final,3p,times,twocolumn,authoryear]{elsarticle}
%% \documentclass[final,5p,times,authoryear]{elsarticle}
%% \documentclass[final,5p,times,twocolumn,authoryear]{elsarticle}

%% For including figures, graphicx.sty has been loaded in
%% elsarticle.cls. If you prefer to use the old commands
%% please give \usepackage{epsfig}

%% The amssymb package provides various useful mathematical symbols
\usepackage{amssymb}
\usepackage{amsmath}
\usepackage{amsfonts}       % blackboard math symbols
\usepackage{nicefrac}       % compact symbols for 1/2, etc.
\usepackage{microtype}      % microtypography
\usepackage{algpseudocode}
\usepackage{lipsum}
\usepackage{fancyhdr}     
\usepackage{verbatim}
\usepackage[ruled]{algorithm2e}
\usepackage{amsthm}

\newtheorem{proposition}{Proposition}[section]

%% The amsthm package provides extended theorem environments
%% \usepackage{amsthm}

%% The lineno packages adds line numbers. Start line numbering with
%% \begin{linenumbers}, end it with \end{linenumbers}. Or switch it on
%% for the whole article with \linenumbers.
%% \usepackage{lineno}

\journal{Neurocomputing}

\begin{document}

\begin{frontmatter}

%% Title, authors and addresses

%% use the tnoteref command within \title for footnotes;
%% use the tnotetext command for theassociated footnote;
%% use the fnref command within \author or \affiliation for footnotes;
%% use the fntext command for theassociated footnote;
%% use the corref command within \author for corresponding author footnotes;
%% use the cortext command for theassociated footnote;
%% use the ead command for the email address,
%% and the form \ead[url] for the home page:
%% \title{Title\tnoteref{label1}}
%% \tnotetext[label1]{}
%% \author{Name\corref{cor1}\fnref{label2}}
%% \ead{email address}
%% \ead[url]{home page}
%% \fntext[label2]{}
%% \cortext[cor1]{}
%% \affiliation{organization={},
%%            addressline={}, 
%%            city={},
%%            postcode={}, 
%%            state={},
%%            country={}}
%% \fntext[label3]{}

\title{Diffusion Model Conditioning on Gaussian Mixture Model and Negative Gaussian Mixture Gradient}

%% use optional labels to link authors explicitly to addresses:
%% \author[label1,label2]{}
%% \affiliation[label1]{organization={},
%%             addressline={},
%%             city={},
%%             postcode={},
%%             state={},
%%             country={}}
%%
%% \affiliation[label2]{organization={},
%%             addressline={},
%%             city={},
%%             postcode={},
%%             state={},
%%             country={}}

\author[lable1]{Weiguo Lu}
\author[lable1]{Xuan Wu}
\author[lable1]{Deng Ding}
\author[lable2,lable3]{Jinqiao Duan}
\author[lable1]{Jirong Zhuang}
\author[lable3,lable4]{Gangnan Yuan\corref{cor1}}
\cortext[cor1]{Corresponding author}
\affiliation[lable1]{organization={University of Macau},%Department and Organization
            city={Macau},
            postcode={999078}, 
            country={China}}
\affiliation[lable2]{organization={Great Bay University},%Department and Organization
            city={Dongguan},
            postcode={523000}, 
            country={China}}  
\affiliation[lable3]{organization={Great Bay Institute for Advanced Study},%Department and Organization
            city={Dongguan},
            postcode={523000}, 
            country={China}}    

\affiliation[lable4]{organization={University of Science and Technology of China},%Department and Organization
            city={ Hefei},
            postcode={230026}, 
            country={China}}
            
\begin{abstract}
%% Text of abstract
Diffusion models (DMs) are a type of generative model that has had a significant impact on image synthesis and beyond.They can incorporate a wide variety of conditioning inputs—such as text or bounding boxes—to guide generation.
In this work, we introduce a novel conditioning mechanism that applies Gaussian mixture models (GMMs) for feature conditioning, which helps steer the denoising process in DMs. Drawing on set theory, our comprehensive theoretical analysis reveals that the conditional latent distribution based on features differs markedly from that based on classes. Consequently, feature-based conditioning tends to generate fewer defects than class-based conditioning. We trained two diffusion models with GMM-based conditioning separately. The experimental results support our theoretical findings.
Additionally, we propose a new gradient function named the Negative Gaussian Mixture Gradient (NGMG) and incorporate it into the training of diffusion models alongside an auxiliary classifier. We theoretically demonstrate that NGMG offers comparable advantages to the Wasserstein distance, serving as a more effective cost function when learning distributions supported by low-dimensional manifolds, especially in contrast to many likelihood-based cost functions, such as Kullback-Leibler (KL) divergences.
\end{abstract}

%%Graphical abstract
\begin{graphicalabstract}
\begin{figure}[ht]\centering
\includegraphics[scale=0.45]{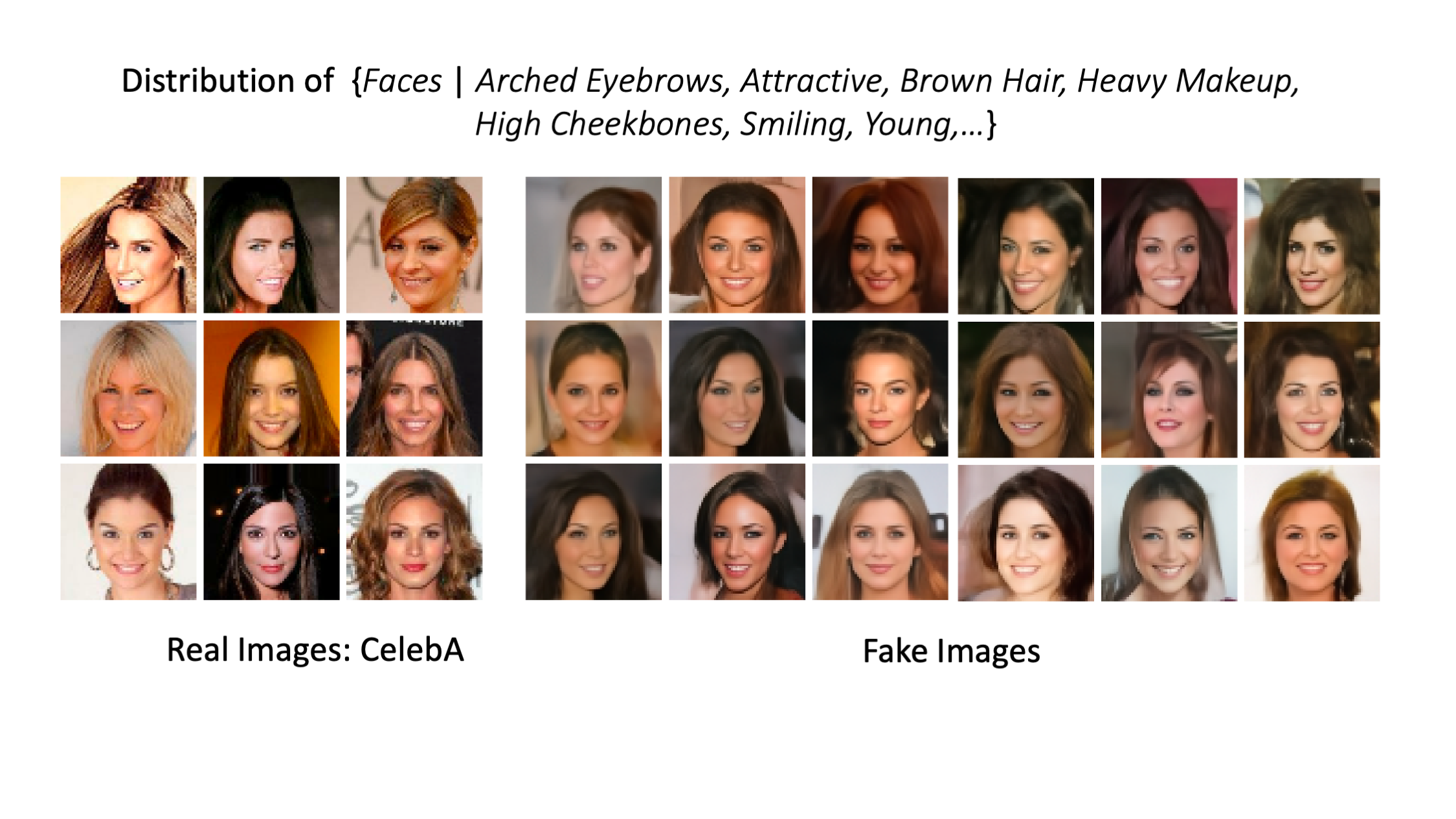}
\caption{Images comparison. Left: real images from CelebA. Right: random generated samples from diffusion model conditioning on Gaussian mixture model. Both of these images have the same feature condition.}
	\label{fig1}
 \end{figure}
\end{graphicalabstract}

%%Research highlights
\begin{highlights}
\item Research highlight 1: A diffusion model conditioning on Gaussian mixture model is proposed. Latent variables in our model are random variables so that data(real) distribution is approximated by latent distribution.

\item Research highlight 2: Latent distributions can be constructed by classes or features. We show that latent distribution build by features are theoretically better than classes under set theory.

\item Research highlight 3: A classifier is integrated in our diffusion model and a novel distance function is proposed called the negative Gaussian mixture gradient. We theoretically prove that our distance function has the same benefit as the Wasserstein distance which more sensible compared to many likelihood-based cost functions, such as KLs, when learning distributions supported by low-dimensional manifolds.

\item Research highlight 4: Additionally, we binds negative Gaussian mixture gradient together with entropy for Neural network. It can be seen as a combination of Wassersetin distance and likelihood. A direct comparison to binary cross entropy is made, and our method performs significantly better in training.

\end{highlights}

\begin{keyword}
%% keywords here, in the form: keyword \sep keyword
Gaussian Mixture Model  \sep Neural Network  \sep Diffusion  Model  \sep Latent Variable  \sep Wasserstein Distance
%% PACS codes here, in the form: \PACS code \sep code

%% MSC codes here, in the form: \MSC code \sep code
%% or \MSC[2008] code \sep code (2000 is the default)

\end{keyword}

\end{frontmatter}

%% \linenumbers

%% main text
\section{Introduction}
\label{intro}
Generative models are one of the most spectacular fields in recent neural network development. Diffusion models \citep{Sohl2015Deep, Ho2020Denoising, Song2019Generative, Kingma2021Variational, Song2020Improved, Rombach2022High, Dhariwal2021Diffusion} achieve impressive results in image synthesis, audio generation \citep{Chen2021Wavegard, Kong2020Diffwave, Mittal2021Symbolic}, and super-resolution \citep{Saharia2022Image}. DMs have several benefits compared to GANs, which are commonly known to have mode collapse and training instabilities. The underlying model of DMs is typically a U-Net \citep{Dhariwal2021Diffusion, Ho2020Denoising, Song2020Improved}\citep{Ronneberger2015Unet}, a variation of the autoencoder \citep{kingma2014auto}. While classic autoencoders decode information from latent variables in the bottleneck layer, U-Net integrates information from each encoder layer for decoding. Latent diffusion models \citep{Rombach2022High} train an autoencoder that produces a lower-dimensional latent space to reconstruct the data space \citep{Razavi2019Generating, VanDenOord2017Neural}. Various image-to-image, text-to-image, and token-based conditioning mechanisms can be applied in latent diffusion models. The text-to-image task takes textual information and tokenizes it as conditional input. In \cite{Rombach2022High}'s work, a BERT tokenizer \citep{Devlin2019Bert} is used to infer latent codes. In many conditioning mechanisms, latent variables are commonly used to control generations. 

In this work, we propose a new method that uses random variables for latent. For instance, the concept of `smiling' cannot be reduced to a specific angle of mouth posture. Similarly, when we consider `Beautiful starry night', it represents a concept that is constructed by almost infinite variations under different conditional scenarios. In short, `Beautiful starry night' is a distribution rather than a fixed value. The attributes of the CelebA dataset \citep{Liu2015Face} provide a good example. Features labels such as `Arched Eyebrows', `Brown Hair', `Young', are more akin to concepts than precise descriptors. With this in mind, we propose a variation of the token-based conditioning mechanism using the Gaussian mixture model to construct a distributional conditioning approach. Figure \ref{fig1} illustrates an example of our proposed diffusion model trained on CelebA. From a distributional perspective, given any set of features, a conditional distribution is formed from which images can be randomly sampled. When given conditions such as `Arched Eyebrows', `Attractive', `Brown Hair', etc., our generation should adhere to the corresponding conditional distribution.

Training neural networks with our proposed conditioning mechanisms can be viewed as a method for approximating the event space of the data distribution. We find that latent distributions, constructed based on classes or features, can significantly impact model performance. Unlike class labels, which are typically disjoint, features often overlap. The CelebA dataset does not categorize images into distinct classes but assigns a set of features to each image, effectively dividing the dataset into overlapping subsets based on these features. Latent distributions formed around features can lead to more refined conditional event spaces, yielding generations with fewer defects than those constructed solely around classes. We proved this claim through set theory, and our experiments provide both theoretical and empirical support for our findings.

Incorporating an additional classifier to train a generator has demonstrated substantial enhancements in model performance. The Generative Adversarial Network (GAN) \citep{Goodfellow2020Generative} employs an neural network, known as the discriminator, to adversarially train the generator. Numerous extensions and variations of GANs have achieved state-of-the-art generation quality in a variety of image generation tasks \citep{Wu2019Logan, Karras2020Analyzing, Brock2018Large, Mirza2014Conditional}. Similarly, classification has been leveraged to augment generation quality and to increase the utility of trained latent variables, as evidenced by the literature \citep{Dumoulin2017A, Vries2017Modulating, Miyato2017Cgan, Lucic2019High, Dash2017Tac, Lang2021Explaining, Lu2023Efficient, Dhariwal2021Diffusion}. To capitalize on these advancements, we introduce a modified version of the diffusion model that incorporates a classifier into its training process. Concurrently, we have developed a novel distance function based on the Gaussian mixture model for the effective training of this classifier.

In unsupervised learning, the general goal is to learn a probability mass or density function. Likelihood-based methods, such as KL divergence and Jensen-Shannon divergence, are the two most prevalent techniques. However, these methods have been reported to exhibit instability during the training process. Studies by \cite{Arjovsky2017Towards, Arjovsky2017Wasserstein} indicate that the predicted and true distributions often exist on low-dimensional manifolds and are unlikely to intersect significantly. This situation can lead to an infinite KL distance, contributing to unstable training. The Wasserstein GAN \citep{Arjovsky2017Wasserstein} significantly advances the learning by minimizing an approximation of the Wasserstein distance. Drawing inspiration from the theories of Wasserstein distance and certain physical concepts, along with tools such as the Gaussian kernel covariance matrix and the Gaussian mixture model, we introduce a novel gradient approximation function named \textbf{Negative Gaussian Mixture Gradient (NGMG)}. This function is versatile and can be applied to various distance functions. We show that NGMG is a linear transform of the Wasserstein distance, inheriting its advantages as a cost function over many likelihood-based methods. We provide a distribution learning algorithm based on NGMG and conduct experiments in density estimation and neural network training. Our observations suggest that the proposed NGMG outperforms the classical binary cross-entropy loss.

Section 2 introduces preliminary concepts and establishes the general notation used throughout the paper. Section 3 details the proposed diffusion model, which conditions on the Gaussian mixture model, and includes experiments as well as a comprehensive theoretical analysis using set theory. Section 4 describes our Negative Gaussian Mixture Gradient function, presenting the methodology and a theoretical comparison between the Wasserstein distance and NGMG. Additionally, experiments are conducted to evaluate the efficacy of the NGMG approach. The final section draws conclusions from the study.

%%%%%%%%%%%%%%%%%%%%%%%%%%%%%%%%%%%%%%%%%%%%%%%%%%%%%%%%%%%%%%%%%%%%
%%%%%%%%%%%%%%%%%%%%%%%%%%%%%%%%%%%%%%%%%%%%%%%%%%%%%%%%%%%%%%%%%%%%

\section{Preliminary}
\subsection{Gaussian Mixture Model}

A mixture distribution, also known as a mixture model, is the probability distribution that results from a convex combination of different distributions. The probability density function (PDF), or mixture density, is typically a weighted sum of the distributions' PDFs, with strictly non-negative weights that sum to one. The Gaussian mixture model (GMM) is a type of mixture distribution that assumes all data points are generated from a mixture of a finite number of Gaussian distributions. The PDF of a GMM, as shown in Eq.\eqref{GMMdensity}, is a linear combination of the Gaussian distributions. For each weight $\pi_n \geq 0$, 
\begin{eqnarray*}
    \sum_{n=1}^{N}\pi_n =1.
\end{eqnarray*}
\begin{eqnarray}\label{GMMdensity}
    G(x)=\sum_{n=1}^{N}\pi_n \phi(x;M_n, \Sigma_n),
\end{eqnarray}
where $\phi$ is density function of Gaussian distributions, $x\in \mathbb{R}^D$, mean vector $M_n\in \mathbb{R}^D$, covariance matrix $\Sigma_n\in{\mathbb{R}^{+}}^D\times {\mathbb{R}^{+}}^D$ and $N,D\in \mathbb{N}^+$.

\subsection{GMM Expansion and Learning Algorithm for Density Estimation}
For GMM, the weights $\{\pi_n\}$, mean vectors $\{M_n\}$, and covariance matrices $\{\Sigma_n\}$ are parameterized to enable the application of learning or optimization techniques for tasks such as modeling. The expectation-maximization (EM) algorithm is the classical learning approach for GMMs. This likelihood-based method employs latent variables to cluster the observed data points, iteratively updating the parameter estimates. The EM algorithm has several known limitations, including sensitivity to initial parameter settings and the propensity to converge to local minima, as extensively discussed in the literature \citep{amendola2015maximum, abb2008, bier2003, blo2013, jin2016local, kon1992, kon1993, kwe2013, mck1994, pac2001, shi2017, srebro2007there}. When comparing the distances between two distributions, especially if they originate from wholly distinct distribution families, many methods are unsuitable for leveraging the distance function for learning purposes. For example, it is typically infeasible to approximate a mixture distribution accurately by learning a normal distribution.

A straightforward approach has been introduced \citep{Lu2023Efficient, Lu2023Efficient2} to address these challenges, making GMMs more tractable for learning and more adaptable to contemporary machine learning models, such as neural networks. This method comprises two key components: GMM expansion and a corresponding learning algorithm. \textbf{GMM expansion}, akin to the Fourier series, posits that any density can be approximately represented by a Gaussian mixture model. Under this framework, the component Gaussian distributions are defined with fixed means $M_n$ and covariances $\Sigma_n$, serving as base distributions that do not require further parameterization. The weights $\{\pi_n\}$ are the sole parameters to be learned from the data. This simplification not only facilitates the learning of densities but also standardizes the comparison of distance functions within the same base framework. An efficient one-iteration learning algorithm for this approach is detailed in \cite{Lu2023Efficient2}. The method can be summarized by the following steps:

\begin{enumerate}\label{1ter}
    \item Define $N$ Gaussian distributions and evenly spread $\{M_n\}$ across dataset $\max\left(x\right),\min\left(x\right)$,
    \item Define hyper parameters $\{\Sigma_n\}$,
    \item Initialize $\pi_1=\pi_2...=\pi_N=1/N$,
    \item Calculate $l_n=\sum_{d=1}^D\phi(x_d;M_n, \Sigma_n)=\sum_{d=1}^D\phi_n(x_d)$,
    \item $\pi_{n}^{+1}=\frac{l_n}{\sum_{n=1}^Nl_n}$.
\end{enumerate}

Informed by the above studies, any two distributions, $g_1$ and $g_2$, can be approximated by two categorical distributions, $\pi_1$ and $\pi_2$ respectively. The discrepancy between $g_1$ and $g_2$ can thus be equated to the difference between $\pi_1$ and $\pi_2$. In this work, we employ GMM to build latent distributions and utilize it as a cost function within our proposed framework. The groundwork of our approach is the concept of GMM expansion, with more detailed elucidation to be provided in Sections 2.4 and 4.

\subsection{Wasserstein-Distance and GMM Expansion}
In this work, we proposed a novel distance function called negative Gaussian mixture gradient. We discover that our method is highly relative to Wassersetein distance and share the same benefit. Methodology and proofs are provided in Section 4. Here we shows general notation of Wasserstein distance and it's special form under GMM expansion. 

Consider a measurable space ($\Omega$, $\mathcal{F}$), $P$ and $Q$ are probability measure defined on ($\Omega$, $\mathcal{F}$). The \textbf{Earth-Mover distance or Wasserstein distance} is defined by:
\begin{eqnarray*}
    W_n(P,Q)=\left(\inf_{\gamma\in\Gamma(P,Q)}E_{(x,y)\sim\gamma}d(x,y)^n\right)^{1/n},\\
    \int \gamma(x,y)dy=P,\\
    \int \gamma(x,y)dx=Q. 
\end{eqnarray*}

1-Wasserstein distance for 1 dimensional distribution:
\begin{eqnarray*}
    W_1(P,Q)=\int_{0}^{1}\mid F^{-1}_p(z)-F^{-1}_q(z)\mid dz,
\end{eqnarray*}
where $z$ is the quantile and $F^{-1}$ is the inverse  cumulative distribution function.  \\
The \textbf{$p$-Wasserstein distance} between $g_1$ and $g_2$ is regarded as 
\begin{eqnarray}
W_p(g_1,g_2)=\left(\int_{0}^{1}\mid G_1^{-1}(q)-G_2^{-1}(q)\mid^p \,dq \right)^{1/p},\label{wdistance}
\end{eqnarray}
where $g_1$ and $g_2$ are both one-dimensional distributions and $G_1$, $G_2$ are distribution functions, respectively.  Consider $G_1$ and $G_2$ have bounded support $M$. When $P =1$ in Eq.\eqref{wdistance}, the scheme of transportation $x=G_2^{-1}\circ G_1$ leads to
\begin{eqnarray}\label{sep}
    W_1(g_1,g_2)=\int_{\mathbf{M}}\mid G_1(x)-G_2(x)\mid \,dx,
\end{eqnarray}
where continuous $G_i:  Lipz(G_i)\leq1$, $i=1,2$. $G_i$ is satisfied by \textbf{Lipschitz continuity} and $ Lipz(G_i)$ is the Lipschitz constant. Based on the duality theorem of Kantorovich and Rubinstein (1958), $W_1(g_1,g_2)$ is bounded because $\mid G_1(x)-G_2(x)\mid$ is bounded. 
\begin{proposition}\label{proposition1}
    Under the GMM expansion setup, the 1-Wasserstein of two distribution is given by:
   \begin{eqnarray*}
\begin{aligned}
    W_1(\pi^{(1)},\pi^{(2)}) &=\mid \parallel\overrightarrow{W_1}(\pi^{(1)},\pi^{(2)})^+\parallel_1-\parallel\overrightarrow{W_1}(\pi^{(1)},\pi^{(2)})^-\parallel_1\mid \\
    &=\mid \parallel FB\cdot (\pi^{(1)}-\pi^{(2)})\parallel_1- \parallel  F(I-B)\cdot (\pi^{(1)}-\pi^{(2)})\parallel_1 \mid
\end{aligned}
\end{eqnarray*}
where $I$ is the identity matrix, $B$ is a matrix given by:
\begin{eqnarray*}
\begin{aligned}
    B &= \begin{bmatrix}b_1 & \dots & 0\\
    \vdots &\ddots& \vdots \\
    0 &\cdots&b_N  \end{bmatrix},~~
    b_n=\begin{cases}1 & \pi_n^{(1)}-\pi_n^{(2)} \geq 0\\0 & \pi_n^{(1)}-\pi_n^{(2)} < 0\end{cases},\\
\end{aligned}
\end{eqnarray*}
\end{proposition}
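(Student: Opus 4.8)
The plan is to start from the closed form in \eqref{sep}, namely $W_1(g_1,g_2)=\int_{\mathbf{M}}|G_1(x)-G_2(x)|\,dx$ for distributions with bounded support $\mathbf{M}$, and to exploit that under the GMM expansion both cumulative distribution functions are affine in the weight vectors: writing $\Phi_n$ for the fixed CDF of the $n$-th base Gaussian, $G_i(x)=\sum_{n=1}^{N}\pi_n^{(i)}\Phi_n(x)$, hence
\begin{eqnarray*}
G_1(x)-G_2(x)=\sum_{n=1}^{N}\bigl(\pi_n^{(1)}-\pi_n^{(2)}\bigr)\Phi_n(x),
\end{eqnarray*}
so the CDF difference is a linear image of $d:=\pi^{(1)}-\pi^{(2)}$, with $\mathbf{1}^{\top}d=0$.

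First I would discretize the integral and read off the matrix $F$ (which the statement leaves implicit). Because the means $\{M_n\}$ are evenly spread over $[\min(x),\max(x)]$ with spacing $\Delta$ and the base covariances are fixed, I partition $\mathbf{M}$ into $N$ cells, the $k$-th centred at $M_k$; on that cell $\Phi_n\approx\mathbf{1}[n\le k]$, so $G_1(M_k)-G_2(M_k)$ equals, up to a half-cell correction, the $k$-th partial sum of $d$. Hence $\int_{\mathbf{M}}|G_1-G_2|\,dx$ reduces to $\|Fd\|_1$, where $F$ is the lower-triangular grid-weighted cumulative-sum matrix, $(F)_{kn}=\Delta$ for $n\le k$ and $0$ otherwise (equivalently, $F$ collects the cell integrals $\int_{\text{cell }k}\Phi_n\,dx$). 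This identifies $\overrightarrow{W_1}(\pi^{(1)},\pi^{(2)}):=Fd$ as the per-node vector of scaled CDF differences.

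Next I would split $d=Bd+(I-B)d$ with the $0/1$ selector $B$ of the statement, so that $Bd\ge 0$ and $(I-B)d\le 0$ coordinatewise. Since $F$ is a cumulative-sum operator with nonnegative entries, $FBd\ge 0$ and $F(I-B)d\le 0$ coordinatewise; these are the positive and negative ``cumulative flows'' $\overrightarrow{W_1}^{+}=FBd$ and $\overrightarrow{W_1}^{-}=F(I-B)d$, and $\overrightarrow{W_1}=FBd+F(I-B)d$. Putting $u=FBd\ge 0$ and $v=-F(I-B)d\ge 0$, the asserted identity reads $\|u-v\|_1=\bigl|\,\|u\|_1-\|v\|_1\,\bigr|$, which holds as soon as $u_k-v_k=(Fd)_k$ keeps a constant sign in $k$: in that case $\|u-v\|_1=\bigl|\sum_k(u_k-v_k)\bigr|=\bigl|\|u\|_1-\|v\|_1\bigr|$.

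The hard part is exactly this last step. In general one only has $\bigl|\|u\|_1-\|v\|_1\bigr|\le\|u-v\|_1\le\|u\|_1+\|v\|_1$, with equality on the left precisely when the cumulative CDF difference $Fd$ does not change sign on $\mathbf{M}$ --- i.e. when the two GMM-expansion CDFs never cross. So to reach the stated equality I would either invoke a stochastic-ordering hypothesis between $\pi^{(1)}$ and $\pi^{(2)}$ (or argue that the common-base GMM expansion enforces it in the regime of interest), or read the identity as the natural one-dimensional optimal-transport decomposition, in which the cumulative surplus is transported onto the cumulative deficit and the net cost is the gap between the positive and negative flows; everything else --- assembling $F$ and $B$ and verifying $\overrightarrow{W_1}^{+}=FB\cdot(\pi^{(1)}-\pi^{(2)})$ and $\overrightarrow{W_1}^{-}=F(I-B)\cdot(\pi^{(1)}-\pi^{(2)})$ --- is routine matrix bookkeeping.
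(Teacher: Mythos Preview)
Your construction of $F$ does not match the paper's. You build $F$ as a lower-triangular cumulative-sum matrix coming from a cell discretization of $\int_{\mathbf{M}}|G_1-G_2|\,dx$. In the paper, $F$ is the \emph{diagonal} matrix with $(F)_{nn}=\int_{\mathbf{M}}F_n(x)\,dx$, the full-support integral of the $n$-th base CDF; no discretization is performed. The paper simply writes $G_i(x)=\sum_n\pi_n^{(i)}F_n(x)$, substitutes into \eqref{sep}, splits the sum termwise by the sign of $\pi_n^{(1)}-\pi_n^{(2)}$ into scalar pieces $W_1^{+}$ and $W_1^{-}$, and then packages those $N$ terms into coordinate vectors $\overrightarrow{W_1}^{\pm}$. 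With this diagonal $F$ the identities $\overrightarrow{W_1}^{+}=FB\,(\pi^{(1)}-\pi^{(2)})$ and $\overrightarrow{W_1}^{-}=F(I-B)\,(\pi^{(1)}-\pi^{(2)})$ are one-line bookkeeping, not the more involved cumulative-flow picture you set up.

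On the substantive point, you are right and the paper is silent. The step the paper writes as $W_1(\pi^{(1)},\pi^{(2)})=|W_1^{+}+W_1^{-}|$ amounts to $\int_{\mathbf{M}}|G_1-G_2|\,dx=\bigl|\int_{\mathbf{M}}(G_1-G_2)\,dx\bigr|$, which holds if and only if $G_1-G_2$ does not change sign on $\mathbf{M}$, i.e., exactly your stochastic-ordering condition. The paper asserts this equality without comment. So your discretization detour is unnecessary to reproduce the paper's argument, but your diagnosis of the ``hard part'' is accurate: the stated identity is really an inequality in general, and the paper's proof does not address the gap you identified.
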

\begin{proof}
According to the GMM \eqref{GMMdensity}, we have 
\begin{eqnarray*}
    G_i(x)= \int_{-\infty}^x\sum_{n=1}^N  \pi_n^{(i)}\phi_n(s) ds= \sum_{n=1}^N \pi_n^{(i)}\int_{-\infty}^x\phi_n(s)ds=\sum_{n=1}^N  \pi_n^{(i)}F_n(x),\quad i=1,2.\notag
\end{eqnarray*}
Substituting it into \eqref{sep},  we get
\begin{eqnarray}\label{w1}
\begin{aligned}
    W_1(g_1,g_2)
    &=\int_{\mathbf{M}}\mid \sum_{n=1}^N ( \pi_n^{(1)}F_n(x)-\pi_n^{(2)}F_n(x))\mid \,dx\\
    &=\int_{\mathbf{M}}\mid \sum_{n=1}^N (\pi_n^{(1)}-\pi_n^{(2)}) F_n(x)\mid \,dx.
\end{aligned}
\end{eqnarray}

Under GMM expansion, the Wassertein distance between $g_1$ and $g_2$ becomes a function of $\pi_n^{(1)}$ and $\pi_n^{(2)}$. 

Let $\pi^{(i)} =[\pi_1^{(i)},\dots\pi_N^{(i)}]^T,i=1,2$, $W_1(g_1,g_2)=W_1(\pi^{(1)},\pi^{(2)})$, 
\begin{eqnarray}\label{gmmwasdist}
\begin{aligned}
    W_1(\pi^{(1)},\pi^{(2)})^+ &=\int_{\mathbf{M}} \sum_{n=1}^N b_n(\pi_n^{(1)}-\pi_n^{(2)})  F_n(x) \,dx, \\
    ~\\
    W_1(\pi^{(1)},\pi^{(2)})^- &=\int_{\mathbf{M}} \sum_{n=1}^N (1-b_n)(\pi_n^{(1)}-\pi_n^{(2)})  F_n(x) \,dx, \\
    ~\\
    W_1(\pi^{(1)},\pi^{(2)}) &=\mid 
    W_1(\pi^{(1)},\pi^{(2)})^++W_1(\pi^{(1)},\pi^{(2)})^-\mid.
\end{aligned}
\end{eqnarray}

Next, we can vectorize Eq.\eqref{gmmwasdist} and get the following presentation:
\begin{eqnarray*}
\begin{aligned}
    \overrightarrow{W_1}(\pi^{(1)},\pi^{(2)})^+&=\begin{bmatrix}b_1(\pi_1^{(1)}-\pi_1^{(2)})  \int_{\mathbf{M}}  F_1(x) \,dx  \\ \vdots \\b_N(\pi_N^{(1)}-\pi_N^{(2)})\int_{\mathbf{M}}    F_N(x) \,dx \end{bmatrix}=F\begin{bmatrix}b_1(\pi_1^{(1)}-\pi_1^{(2)})    \\ \vdots \\b_N(\pi_N^{(1)}-\pi_N^{(2)})\end{bmatrix}\\
    &=FB\cdot(\pi^{(1)}-\pi^{(2)}),\\
    ~~\\
    \overrightarrow{W_1}(\pi^{(1)},\pi^{(2)})^-&=\begin{bmatrix}(-b_1)(\pi_1^{(1)}-\pi_1^{(2)})  \int_{\mathbf{M}}  F_1(x) \,dx  \\ \vdots \\(1-b_N)(\pi_N^{(1)}-\pi_N^{(2)})\int_{\mathbf{M}}    F_N(x) \,dx \end{bmatrix}\\
    &=F(I-B)\cdot (\pi^{(1)}-\pi^{(2)}),\\
    ~~\\
    F &=\begin{bmatrix}\int_{\mathbf{M}} F_1(x) \,dx & \cdots &0 \\
    \vdots&\ddots&\vdots\\
     0&\cdots & \int_{\mathbf{M}} F_N(x) \,dx\end{bmatrix}, \\
\end{aligned}
\end{eqnarray*}
where $I$ is the identity matrix.
Under GMM expansion, we can rewrite Eq.\eqref{w1} as follows:
\begin{eqnarray}\label{gmmwasdist2}
\begin{aligned}
    W_1(\pi^{(1)},\pi^{(2)}) &=\mid \parallel\overrightarrow{W_1}(\pi^{(1)},\pi^{(2)})^+\parallel_1-\parallel\overrightarrow{W_1}(\pi^{(1)},\pi^{(2)})^-\parallel_1\mid \\
    &=\mid \parallel FB\cdot (\pi^{(1)}-\pi^{(2)})\parallel_1- \parallel  F(I-B)\cdot (\pi^{(1)}-\pi^{(2)})\parallel_1 \mid
\end{aligned}
\end{eqnarray}
\end{proof}
 Eq.\eqref{gmmwasdist2} shows a vectorized representation of Wasserstein distance under GMM expansion in bounded support $\mathcal{M}$. This will be used in Section 4 to show that the proposed negative Gaussian mixture gradient function is strongly related to Wasserstein distance.

\section{Diffusion Model Conditioning on Gaussian Mixture Model}
Neural networks are deterministic systems. In a probabilistic perspective, a neural network can be seen as an objective function that is trained to map the event space of a latent distribution onto a subset of real space. In most cases, probabilistic events do not originate from the model itself; they typically arise from latent variables. One of the most well known example is the GAN. In classic GANs, latent variables are drawn from normal distributions. Neural network as a objective function, the output of the GAN is expected be comparatively normal, with the mean and variance projected into a high-dimensional, non-linear form. Mode collapse is an inherent issue in this structure due to the limitations of Gaussian latents. In this work, we focus on the probabilistic interpretation of neural networks. We propose a conditioning mechanism using a Gaussian mixture model and provide a theoretical analysis of the connection between latent and real distributions later in this section.

\subsection{Model Architecture}
In a text-to-image model, text information provides insights into how we humans interpret images. In other words, images are conditioned by text, and text serves as an embedded variable within a neural network to guide the generation process. For example, given the input `A human with glasses',w a language model will process this input and produce a corresponding output for the generation. However, intuitively, `A human with glasses' is more likely to represent a concept or a distribution than a specific value. In our model, we propose that a feature/text/concept should be considered distributed instead of a fixed value. While the diffusion model assigns a latent value through a Gaussian process to approximate the data distribution, this remains a debatable assumption. The image distribution for `A human with glasses' may not be centered around any mean image. Rather, it is diverse and widespread and potentially exhibits infinite variations. Based on the preceding rationale, we suggest that features are intrinsically distributional, which informs the setup of our model. We represent the distribution with $D(\cdot)$, and our model's mathematical formulation is expressed as follows:
\begin{eqnarray}\label{hypoth1}
    \begin{aligned}
        Target &=NN(\mathcal{Z})+\epsilon,\\
        \mathcal{Z} &=F_{L}\circ F_{L-1} \circ \cdots F_{1}(\mathcal{X}),\\
        \mathcal{X} &=[X_1,X_2,...X_K]^T,\\
        X_k &\sim D(\theta_k),\\
    \end{aligned}
\end{eqnarray}
where $F_1, F_2, \cdots, F_L$ are functions. They can be any function that is used as an objective function. 
\begin{figure}[ht]
	\centering
		\includegraphics[scale=0.45]{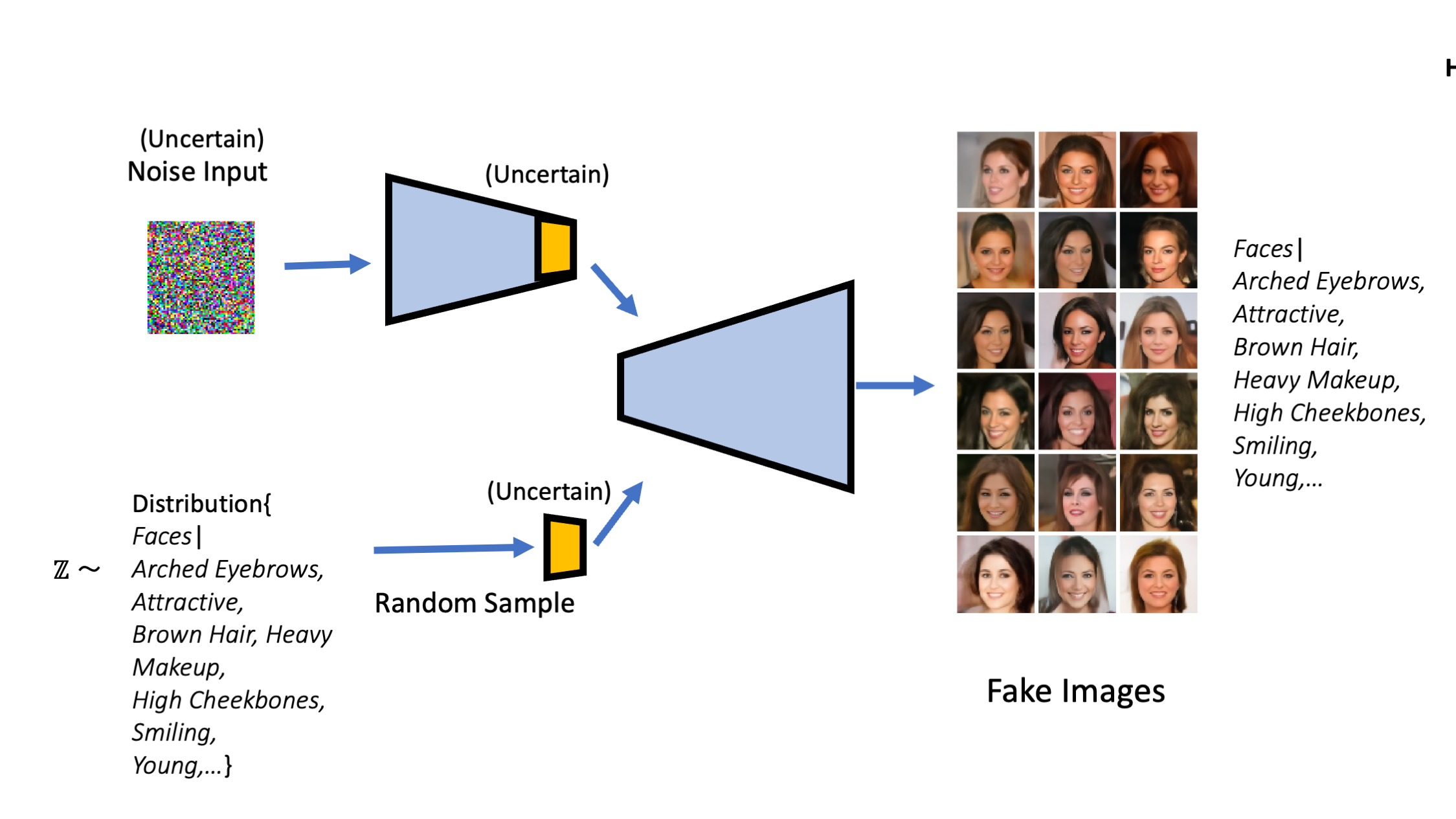}
	\caption{Image generation used conditional feature distribution. When model is given conditional distribution of $\mathcal{Z}$, model has two source of uncertainty. One is from the denoise Gaussian process. Second is the conditional distribution.}
	\label{picFacesampling}
\end{figure}

This model setup is indicative of a hierarchical feature reconstruction that is consistent with the fundamental principles of neural networks. Functions $F_l$ transform lower-level feature distributions into higher-level ones. Our proposed diffusion model is depicted in Figure \ref{picFacesampling} and Figure \ref{picFaceSampling1}. The most notable distinction between our model and other diffusion models lies in the treatment of the latent space $\mathcal{Z}$. As illustrated in Figure \ref{picFacesampling}, to generate images from specific feature categories, we sample from the conditional distribution of $\mathcal{Z}$ randomly. Given a particular latent value within $\mathcal{Z}$, our model can produce a series of samples that resemble target images but exhibit subtle detail variations. Such variations, which include attributes like hair texture, earrings, and skin color, resemble Gaussian distributions characterized by a mean and small variance, as exemplified in Figure \ref{picFaceSampling1}.

The architecture utilize two types of uncertainty to approximate a subset of the real space. The first is the latent distribution of $\mathcal{Z}$, and the second is the application of a Gaussian process, which introduces Gaussian noise.

\begin{figure}[ht]
	\centering
		\includegraphics[scale=0.45]{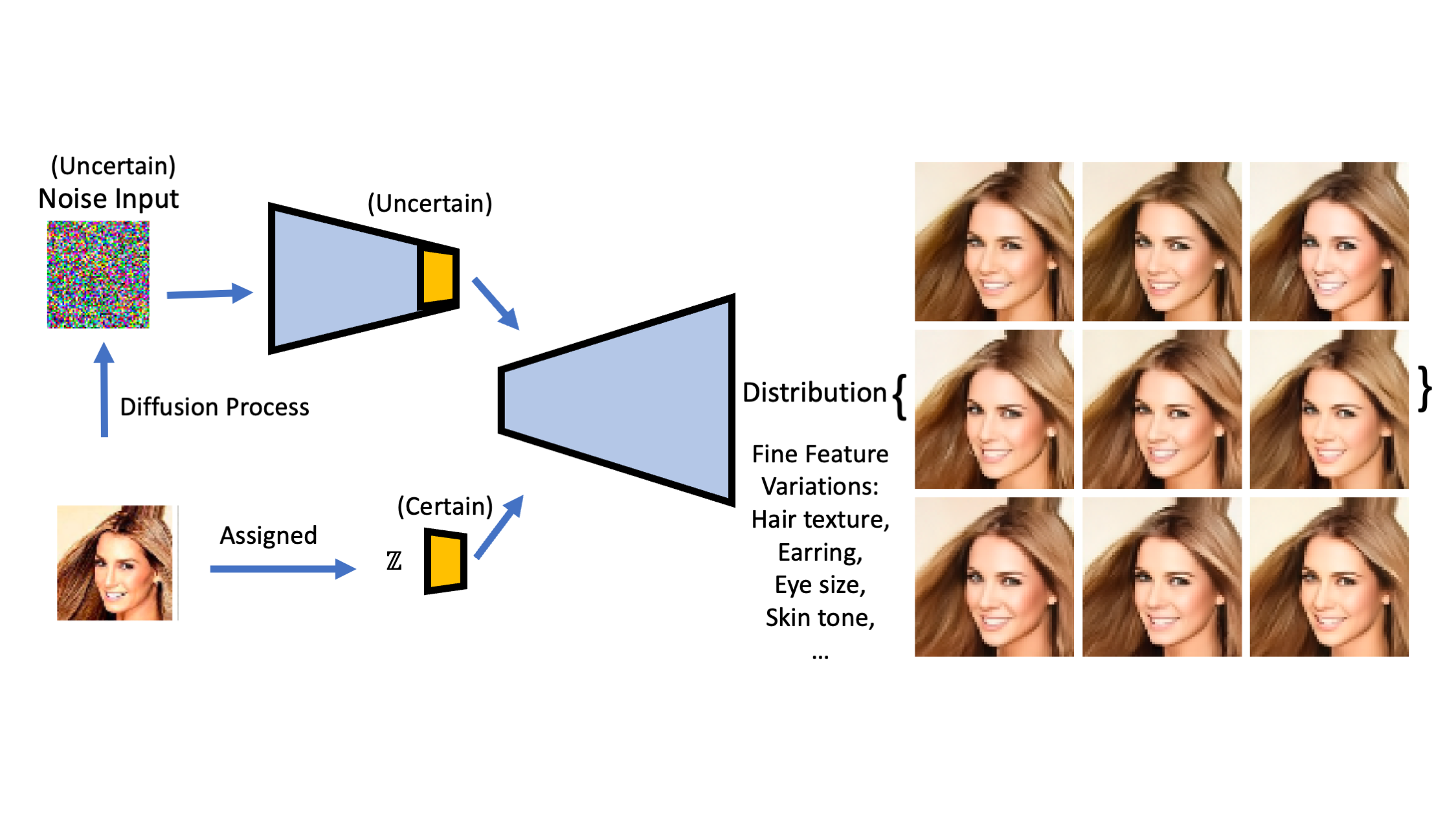}
	\caption{Image generated uses fixed certain $\mathcal{Z}$. When a particular sample of $\mathcal{Z}$ is given, uncertainty only coming from denoise Gaussian process. Each training image is assigned a sample value from $\mathcal{Z}$.}
	\label{picFaceSampling1}
\end{figure}

\ref{Appendix1} provides random samples from the trained models. \ref{Appendix2} details the sampling method used to obtain our results. Figure \ref{picx0Sampling} illustrates the generation results at each step. We use a comparatively larger $\beta$ and a total of 100 diffusion steps to train our diffusion model on the CelebA dataset \citep{Liu2015Face}. The backward autoregressive denoising process starts at step 100 with complete Gaussian noise. The results indicate that our model rapidly converges to the target images and refines details at each sampling step. The predicted $x_0^*$ from steps 80 to 100 exhibit similar generation quality, which is in line with our expectations based on the model's design. The latent distribution of $\mathcal{Z}$ primarily accounts for feature variations on a larger scale, while the Gaussian process is responsible for the finer variations, consistent with its 'normal' properties.

\begin{figure}[ht]
	\centering
		\includegraphics[scale=0.4]{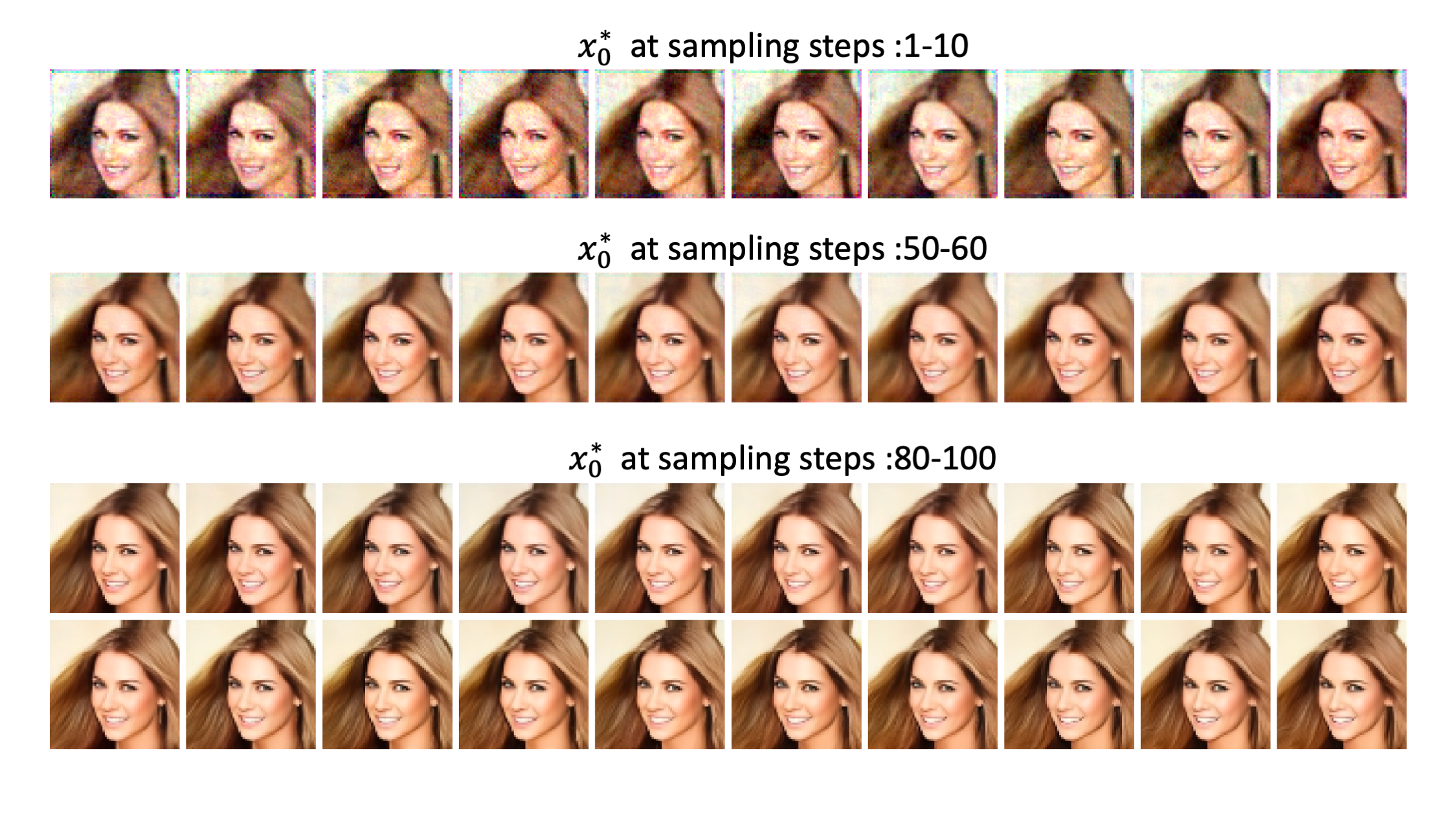}
	\caption{Our sampling process is constantly going back to $x_0$. More detail given in \ref{Appendix2}. $x_0^*$ is the predicted $x_0$. $\beta$ is the variable control the size of noise in diffusion process. The sampling process is shown by Eq.\eqref{denoise}. Our model is trained with comparably larger $\beta$ and only take $T=100$ diffusion steps. Denoisng process start with $x_T$ as a Gaussian noise.}
	\label{picx0Sampling}
\end{figure}

\subsection{Design of Latent $\mathcal{Z}$}

In contrast to text-to-image models that employ a language encoder for conditioning, our method utilizes a specially designed set of latent variables, $\mathcal{Z}$, to directly guide the denoising process. This allows $\mathcal{Z}$ to be trained to capture a subset of the real distribution. In auto-encoders and most of their variations, latent variables usually refer to the encoder output, also known as the bottleneck layer. An interesting example is VQ-VAEs, which map latent variables to a discrete codebook, revealing that neural network latents are likely not continuous. Moreover, the model setup in Eq.\eqref{hypoth1} implies that if the lower-level features $X$ are sampled from a Gaussian distribution, the data distribution needs to be close to a normal distribution centered on some mean. This is due to the additive property of normal distributions. If given some normally distributed independent random variables, their sum is still normally distributed. In other words, the functions $F_l$ in Eq.\eqref{hypoth1} are linear transformations $F_l:\mathcal{W}_{l} \rightarrow \mathcal{W}_{l+1}$, where $\mathcal{W}_{l}$ are normally distributed, and if we apply the ReLU non-linearity to $F_l$, $\mathcal{W}_{l+1}$ is still following some form of a single-peak distribution. Studies report that GMMs are beneficial for various tasks compared to conventional methods such as embedding and learning latent distributions \citep{Nachmani2021Non,Kolouri2018Sliced,Lu2023Efficient,Lu2023Efficient2}. These findings motivate us to use GMM to embed latent variables. GMM can be seen as a combination of discrete and continuous distributions. It not only has all the properties of a normal distribution but also those of a discrete distribution. Our model setup in Eq.\eqref{hypoth1} suggests that instead of embedding text into a fixed value, using a distribution is preferable. For $n=1,2,\dots,N$ and $k\in[1,K]\cap \mathbb{N}$,

\begin{eqnarray}\notag
    A=[A_1,A_2,\dots,A_{K}]^T
\end{eqnarray}
and
\begin{eqnarray}\notag
A_k=
    \begin{cases}1 & \text{if image has feature $k$}\\0 & \text{otherwise}\end{cases}.
\end{eqnarray}

\begin{eqnarray}\label{smallz}
    Z_k=\begin{bmatrix}z_1 \\.\\.\\.\\z_N \end{bmatrix},\quad z_n\sim GMM(\pi,\mu,\sigma) .
\end{eqnarray}

Latent variable vector $\mathcal{Z}$ is defined as follows:
\begin{eqnarray}\notag
    \mathcal{Z}=[Z_1A_1,Z_2A_2,Z_3A_3,\dots ,Z_{K}A_{K}].
\end{eqnarray}

Under this configuration, $\mathcal{Z}$ adheres to a conditional distribution contingent on a given set of features. The indicator function $A$ determines whether a feature is active or inactive. In practice, the $z_n$ values are sampled from the Gaussian mixture model and are coupled with each data image for supervised learning. Traditional GANs utilize a Gaussian distribution, however, within our framework, a Gaussian mixture is employed as suggested by Eq.\ref{smallz}, which delineates the representation for each feature. Empirical studies indicate that the Gaussian mixture model \citep{Lu2023Efficient, Lu2023Efficient2} outperforms a singular Gaussian approach. Although a mathematical rationale for this phenomenon is not provided, our model, as described in Eq.\eqref{hypoth1}, suggests that lower-level features such as hair length—categorized as [`Long', `Short', `Median']—are more suitably represented by a discrete distribution \citep{Razavi2019Generating, VanDenOord2017Neural} or a continuous multi-modal distribution. In our GMM setup, we use three Gaussian components with uniform $\pi$ values. The variance $\sigma$ is constant across all components, and the mean $\mu$ of each component is sufficiently spaced such that the distance between each $\mu$ larger than $\sigma$.

In terms of conditioning, we can condition on classes or on features. Class information is usually mutually exclusive or disjoint, whereas features are not. For example, text can be considered a type of feature information. Each image is labeled a set of words or a sentence instead of a single class. We have found that using feature information o of class information significantly improves model performance. A theoretical explanation based on set theory is provided in the next subsection.
 
\subsection{Conditioning on Feature or Conditioning on Class?}

Conditioning mechanisms in neural networks are directly related to latent distribution. A change in latent distribution could cause performance to be drastically different under the same model specifications. Denote that the event space of data distribution is $S^d$ and it is a subset of our real target distribution event space $S^\mathcal{R}$. Data distribution may be conditional based on some rules, but it is not always necessary. The event space of the data distribution could be just random samples. Consider we have latent variables $S^\mathcal{Z}$. If all the data is given class information, which consists of $M_1$ independent classes, we will split $S^\mathcal{Z}$ into $\{S^{\mathcal{Z}_m}\}_{m=1,2,\dots,M_1}$. If a dataset is labeled with features of size $N$, the situation becomes more complex. The dataset is not categorized into N features; instead, it is split into fine sub-spaces, which are defined by the intersection of certain features. Assume there are $M_2$ sets of intersection subsets, and elements in each subset are $\{\mathcal{Z}_{n,m}\}_{n\in \mathbb{N}^+,m=1,2,\dots,M_2}$. In other words, latent event spaces for features are as follows:
\begin{eqnarray*}\notag
    \mathcal{Z}_{1,m} \cap \mathcal{Z}_{2,m}\cap \dots \cap \mathcal{Z}_{n,m}\cap\dots=\mathcal{Z}_m^*.
\end{eqnarray*}

These defined spaces also imply that:

\begin{eqnarray*}\notag
\bigcup_1^{M_2}S^{\mathcal{Z}^*_m}\subseteq S^{\mathcal{Z}}, \ \ \bigcup_1^{M_1}S^{\mathcal{Z}_m}\subseteq S^{\mathcal{Z}}
\end{eqnarray*}

Neural network is the objective function that takes latent space $S^{\mathcal{Z}_m}$, $S^{\mathcal{Z}_m^*}$ to $\{S^{NN(\mathcal{Z}_m)}\}_{m=1,2,\dots,M_1}$, 
$\{S^{NN(\mathcal{Z}_m^*})\}_{m=1,2,\dots,M_2}$. It is reasonable to assume that a neural network has some high-dimensional error $\epsilon$. We can define that $S^{NN(\bullet)+\epsilon}$ is a direct sum of two subsets:
\begin{equation}\notag
S^{NN(\bullet)+\epsilon}=S^{NN(\bullet)+\epsilon}_\mathcal{R}\cup S^{NN(\bullet)+\epsilon}_\mathcal{D},
\end{equation}
where $S^{NN(\bullet)+\epsilon}_\mathcal{R}$ is the subset of generations that are considered within the event space in real distribution. $S^{NN(\bullet)+\epsilon}_\mathcal{D}$ is the subset of bad generation that is out of the real space. They satisfy
\begin{eqnarray*}
    S^{NN(\bullet)+\epsilon}_\mathcal{D}\cap S^{\mathcal{R}}=\emptyset
\end{eqnarray*}

Assuming that the neural network is well trained, we obtain the following relationship:
\begin{eqnarray}\notag
    S^{d}\subset \bigcup_{m=1}^{M_2} S^{NN(\mathcal{Z}_m^*)+\epsilon}_{\mathcal{R}} \subseteq \bigcup_{m=1}^{M_1} S^{NN(\mathcal{Z}_m)+\epsilon}_{\mathcal{R}} \subset
    S^{\mathcal{R}}.
\end{eqnarray}
Furthermore, 
\begin{eqnarray*}
    \bigcup_{m=1}^{M_1} S^{NN(\mathcal{Z}_m)+\epsilon}-S^{\mathcal{R}}=\bigcup_{m=1}^{M_1} S^{NN(\mathcal{Z}_m)+\epsilon}_{\mathcal{D}}=S^{NN(\mathcal{Z})+\epsilon}_{\mathcal{D}},\\
    \bigcup_{m=1}^{M_2} S^{NN(\mathcal{Z}_m^*)+\epsilon}-S^{\mathcal{R}}=\bigcup_{m=1}^{M_2} S^{NN(\mathcal{Z}_m^*)+\epsilon}_{\mathcal{D}}=S^{NN(\mathcal{Z}^*)+\epsilon}_{\mathcal{D}}.
\end{eqnarray*}

Given $\text{card}(S^\mathcal{Z^*})=\text{card}(S^\mathcal{Z})$, cardinality of $S^{NN(\mathcal{Z})+\epsilon}_{\mathcal{D}}$ and $S^{NN(\mathcal{Z}^*)+\epsilon}_{\mathcal{D}} $ has the following property:

\begin{eqnarray}\label{assumption3}
\begin{aligned}
    \text{card}(S^{NN(\mathcal{Z}^*)+\epsilon}_{\mathcal{D}}) &\le \text{card}(S^{NN(\mathcal{Z})+\epsilon}_{\mathcal{D}}), \\
    \frac{\text{card}(S^{NN(\mathcal{Z}^*)+\epsilon}_{\mathcal{R}})}{\text{card}(S^{NN(\mathcal{Z}^*)+\epsilon})} &\geq \frac{\text{card}(S^{NN(\mathcal{Z})+\epsilon}_{\mathcal{R}})}{\text{card}(S^{NN(\mathcal{Z})+\epsilon})}
\end{aligned}
\end{eqnarray}

The deduction above shows that there are fundamental differences in class and feature information. Figure .\ref{cfset} shows a simple graphic explanation of Eq.\eqref{assumption3}. This mock example showcases a simple scenario with four classes and four features. Red boxes represent the input domain of the neural network. Although each subset of $\mathcal{Z}^*_m$ is significantly smaller than $\mathcal{Z}_m$, when the size of $\mathcal{Z}^*_m$ gets larger and larger, we can recover more and more event space from the real distribution. Intuitively, generations' quality is improved by limiting training within many refined subsets. Hence, based on set theory, we conclude that the subset of defect generation with latent distribution condition on features is smaller or at least equal to the subset of classes. In addition, the deduction above also provides an explanation of why GANs usually report having mode-collapse. The latent variables of GAN are usually normally distributed. Normal distributions are too centred and possibly cause models to converge into a very small subset. 

\begin{figure}[ht]
	\centering
		\includegraphics[scale=0.45]{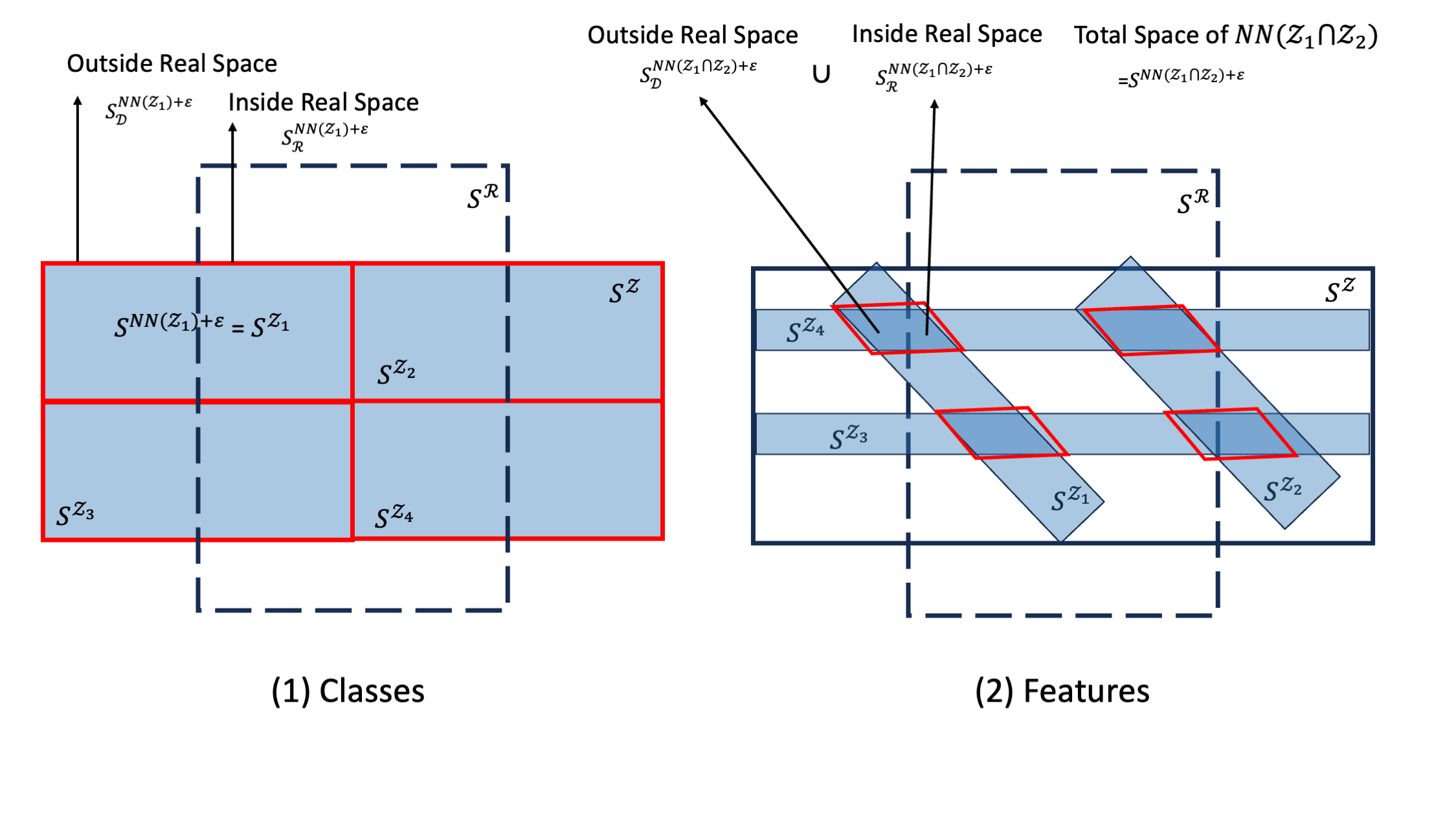}
	\caption{Probability event space of class and feature. Left: The latent event space condition on Class. Right: The latent event space condition on feature.}
	\label{cfset}
\end{figure}

Through the analysis based on set theory, the addressed issue of features and classes also needed to be experimentally tested. Cifar10 \citep{Krizhevsky2009Learning} and CelebA are used to train the diffusion model tso carry out the experimental comparison. Figure .\ref{cifarceleb} shows our results and model specifications. With similar model sizes and the same training method, the model trained on Cifar clearly underperformed the model trained on CelebA. Lost from Cifar10 is also larger than CelebA. The only difference between these two models is our latent distributions. Latent distributions of CelabA consist of features defined by the intersection of smaller subsets. In contrast, Cifar10 only provides independent class information, so latent spaces are disjoint from each other. Orange square boxes in Figure .\ref{cifarceleb} are original images, and other subplots are image reconstruction by diffusion model. Noting that the model trained on CelebA is built with fewer parameters and fewer diffusion steps. The diffusion model output for CelebA has a higher image dimension of 64x64x3 compared to 32x32x3 in Cifar10. In summary, latent distribution built by features shows a capability that uses less but produces more. 

\begin{figure}[ht]
	\centering
		\includegraphics[scale=0.45]{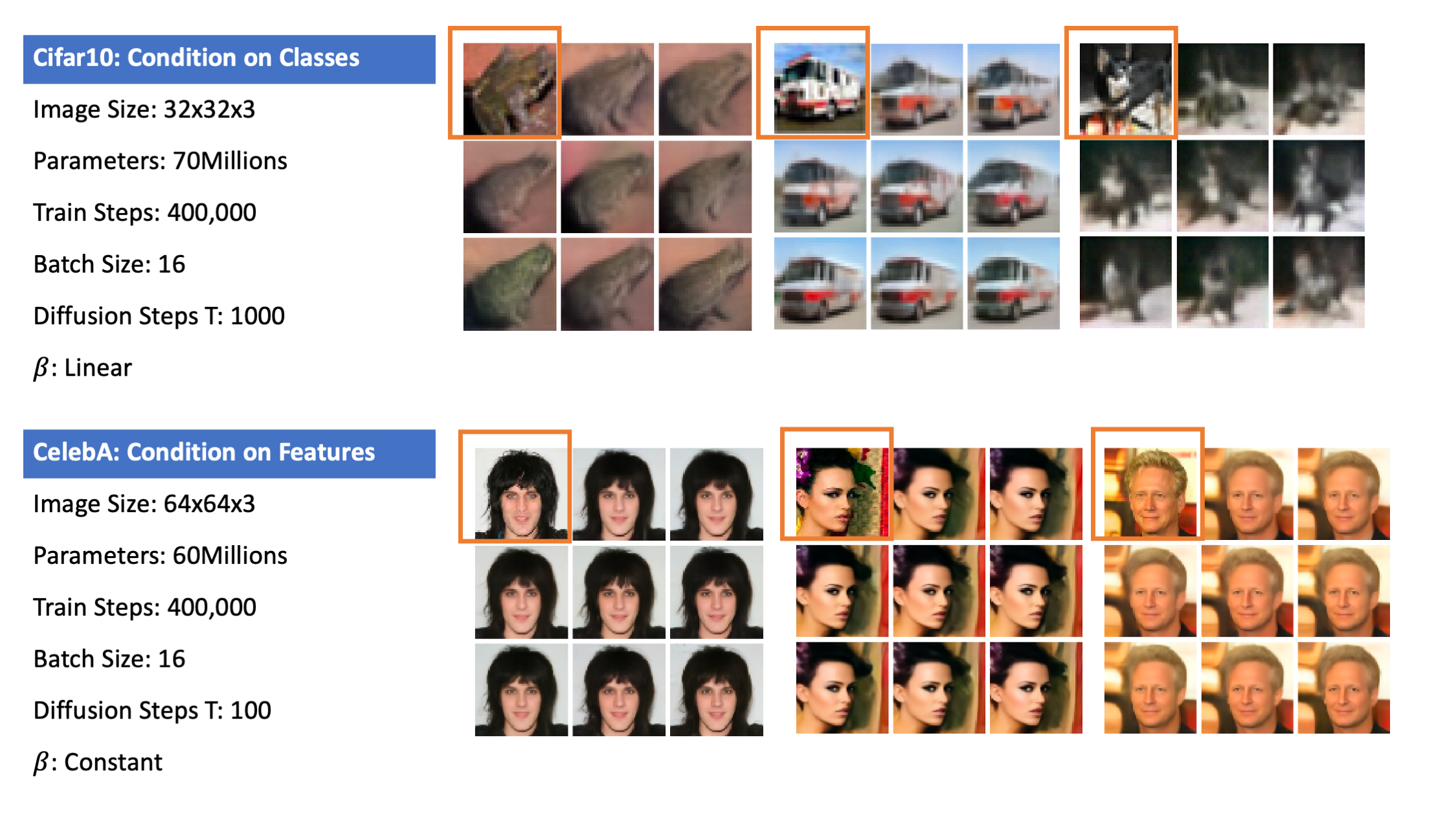}
	\caption{Features vs.Classes. Top: Images and model specification of proposed model train on Cifar10. Down: Images and model specification of proposed model train on CelebA. Orange box is the true images, and the rest of the subplots are generated samples. }
	\label{cifarceleb}
\end{figure}
\subsection{Additional Classifier}
Incorporating classification information has been shown to significantly improve model performance \citep{Mirza2014Conditional,Brock2018Large,Dumoulin2017A,Miyato2017Cgan,Dhariwal2021Diffusion}, even when using synthetic labels \citep{Lucic2019High}. Based on our previous experiments, integrating a classification network with a generative model typically yields benefits, particularly in latent space representation. Models such as Generative Adversarial Networks (GANs) extensively leverage classification to enhance performance \citep{Mirza2014Conditional, Brock2018Large, Dumoulin2017A, Miyato2017Cgan, Lucic2019High}. Dhariwal et al. \citep{Dhariwal2021Diffusion} introduced two classifier-guided sampling techniques in their diffusion model. Motivated by these findings, we incorporated classifiers at the bottleneck layer of our diffusion model.

\begin{figure}[ht]
	\centering
		\includegraphics[scale=0.45]{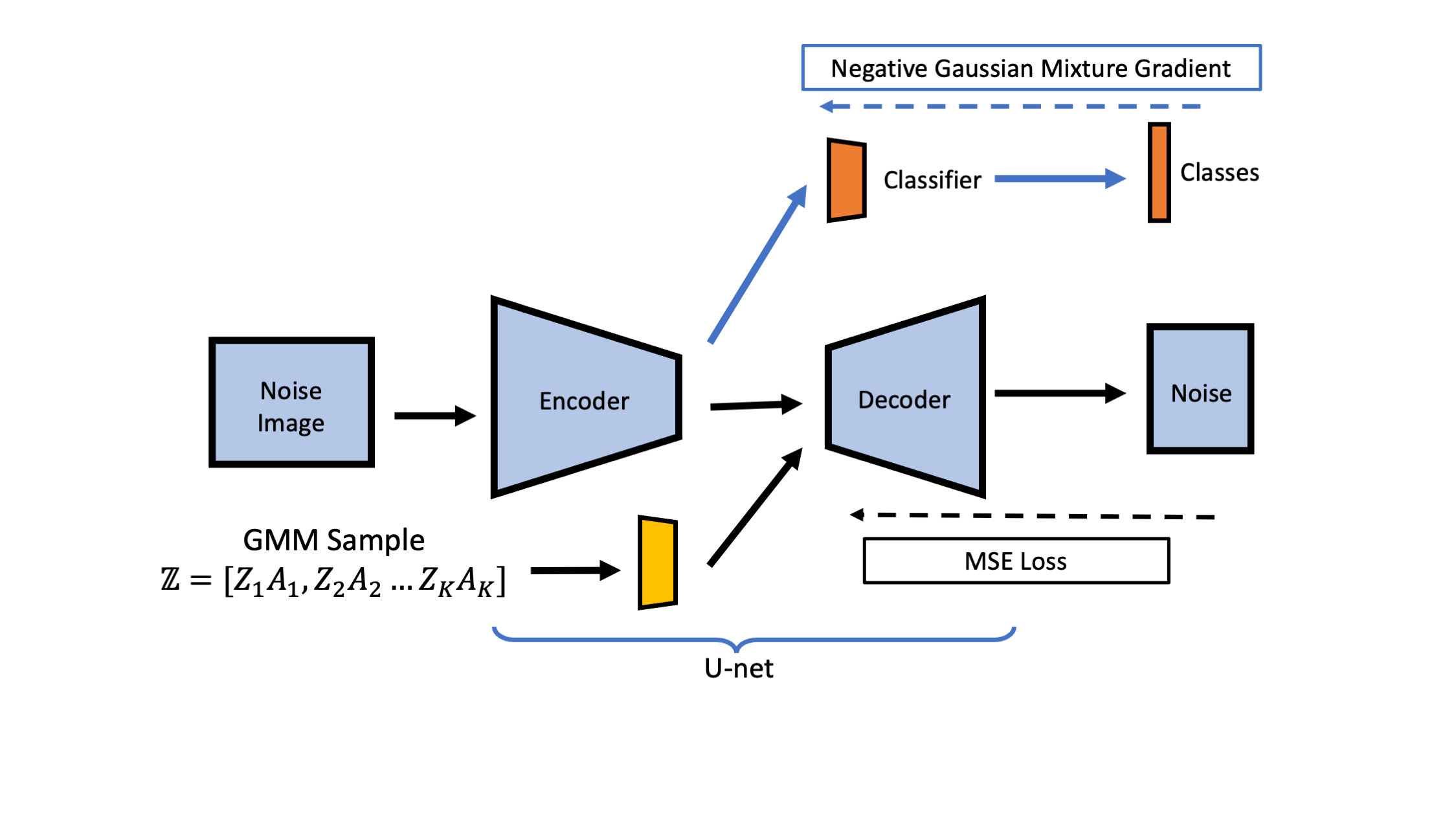}
	\caption{Proposed model with with Classifier. The difference compare to model proposed in Section 3 is added a classifier in the model. Also a novel lost function is applied in training this classifier.}
	\label{GDMsClass}
\end{figure}

Figure \ref{GDMsClass} illustrates our model architecture with an integrated classifier. Adopting a similar approach to our GMM-conditioned diffusion model, a classifier is introduced subsequent to the encoding process. We experience better stability in training which consistent with findings reported in \cite{Mirza2014Conditional, Brock2018Large, Dumoulin2017A, Miyato2017Cgan, Lucic2019High}. 19). Additionally, in terms of classifier training, rather than employing binary cross-entropy, we propose a novel distance function for training this classifier, termed the \textbf{Negative Gaussian Mixture Gradient (NGMG)}. The subsequent section will provide more details on this innovative method.

\section{Negative Gaussian Mixture Gradient}
The Negative Gaussian Mixture Gradient (NGMG) is inspired by the Wasserstein distance, the Gaussian kernel, and certain concepts in physics, such as gravitational force. Analogous to the physical world, where moving an object from point A to point B involves not only the spatial displacement but also accounts for the cost of time and the distance traveled. Similarly, NGMG conceptualizes the transportation of probability mass in a manner that encompasses more than just the difference in probability density, it includes considerations analogous to the temporal and spatial costs in the physical movement.

\subsection{Methodology of NGMG}
A consistent set of notations and clearly define the problems as follows:

\begin{itemize}
    \item $\pi$: $\pi=[\pi_1,\pi_2,...\pi_n]$ is the vector, which represents the earth in the moving problem and probability in distribution. $\pi_i^{(1)}=[\pi_1^{(1)},\pi_2^{(1)},...\pi_n^{(1)}]$ is the distribution we starting with and $\pi_i^{(2)}=[\pi_1^{(2)},\pi_2^{(2)},...\pi_n^{(2)}]$ is the target distribution. 
    \item For most distribution, GMM expansion method shown in Section 2.2 can be applied to approximate it with a set of $\{\pi_i\}$. 
    \item  $\mu,\sigma, \phi_n(x)$ are set-up by GMM expansion in Eq.\eqref{GMMdensity}. Each $\pi_i$ is assigned a $\mu_i$ which gives the object a distance.  Each $\mu_i$ is assigned a $\phi_i$ which is applied to calculate a modified Gaussian kernel covariance matrix. 
    
\end{itemize}

Define a function $L=\pi^{(1)}-\pi^{(2)}$ and only take the negative value of the function.
\begin{equation}\label{functionL}
    L_i=\begin{cases}\pi_i^{(1)}-\pi_i^{(2)} &\quad \pi_i^{(1)}-\pi_i^{(2)} < 0\\0 & \quad \pi_i^{(1)}-\pi_i^{(2)} \geq 0\end{cases}.
\end{equation}

Because
\begin{eqnarray}\notag
    \sum_{i} \pi_i=1,\\
    \mathrm{MIN}(\pi^{(1)}) - \mathrm{MAX}(\pi^{(2)})=-1,
\end{eqnarray}
we have
\begin{equation}\notag
    -1\leq\sum_{i=1}^n L_i\leq0.
\end{equation}
The $\mathrm{NGMG}$ for $\pi_j$ is:
\begin{equation}\label{eqngmg}
    ngmg(\pi^{(1)}_j,\pi^{(2)}_j)=\sum_{i\neq j}^n L_i\phi_i(\mu_j;\mu_i,\sigma),i \neq j
\end{equation}
where $i=1,2,3...n, \text{and } i\neq j$, $\sigma$ is a hyper-parameter. Rewrite Eq.\eqref{eqngmg} as matrix multiplication:

\begin{eqnarray}\label{ngmgfinal}
\begin{aligned}
    \mathit{M}&=\begin{bmatrix}0 & \phi_2(\mu_1;\mu_2,\sigma) & \phi_3(\mu_1;\mu_3,\sigma) &...\phi_n(\mu_1;\mu_n,\sigma)\\\phi_1(\mu_2;\mu_1,\sigma) &0 &\phi_3(\mu_2;\mu_3,\sigma)&... \phi_n(\mu_2;\mu_n,\sigma)\\
.&. &.&.\\
.&. &.&.\\
.&. &.&.\\
\phi_1(\mu_n;\mu_1,\sigma) &\phi_2(\mu_n;\mu_2,\sigma) &\phi_3(\mu_n;\mu_3,\sigma)&... 0\\
\end{bmatrix},\\
\text{NGMG}(L)&=-\mathit{M}\mathit{L}
\end{aligned}
\end{eqnarray}

where $\mathit{L}$ is the vector of $L_i$. $\mathit{M}(\sigma)$ is a Gaussian kernel with diagonal entries set to be zeros. Setting diagonal entries to zeros because we want the gradient of $\pi_i$ does not depend on $L_i$. The essence of Eq.\eqref{ngmgfinal} is energy and gravitation. $L_i$ is the value of the energy at position $i$ and spread out by a Guassian kernel. This energy is negative and attracts other positive value to fill in. $\sigma$ controls how far the $L_i$ could affect. At each position $i$, it moves (gradient) based on the sum of all negative energy generated by other positions and is not affected by its' own. When all negative energy is filled, no gradient will be produced. This gradient function is bounded by:

\begin{eqnarray}\notag
    \begin{aligned}
        \sup(NGMG(L))&=0,\\
        \inf(NGMG(L))&=-1.0*\max(\phi(\mu_j;\mu_i,\sigma)),
    \end{aligned}
\end{eqnarray}

\begin{figure}[ht]
	\centering
		\includegraphics[scale=0.45]{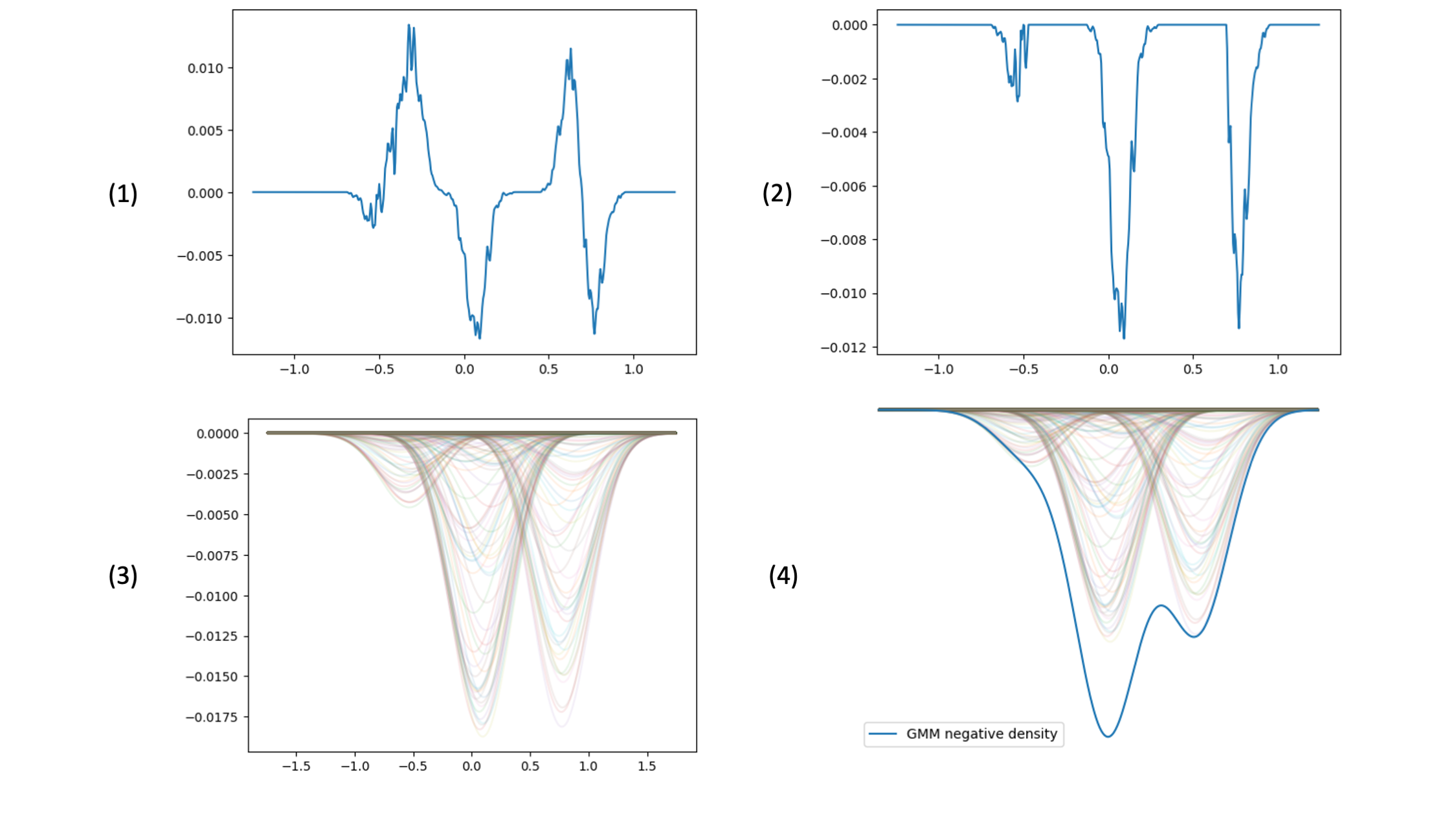}
	\caption{Illustration of Negative Gaussian Mixture Gradient(NGMG). (1) and (2) Shows the Eq.\eqref{functionL}. (3) and (4) Shows the Eq.\eqref{ngmgfinal}. x-axis is the distance($\mu$s) given to two sets of $\pi$s. The negative differences of two $\pi$s is smooth out by proposed modified covariance matrix to act as gradient for the learning. }
	\label{GMMgradient}
\end{figure}

Figure .\ref{GMMgradient} gives a stepwise demonstration to Eq.\eqref{ngmgfinal}. Subplot (1) is the graph of $\pi_i^{(1)}-\pi_i^{(2)}$ and subplot (2) is the graph taking only the negative part of $(\pi_i^{(1)}-\pi_i^{(2)})^-$, Eq.\eqref{functionL}. By assigning each $\pi$ a Gaussian distribution, a mixture negative mixture density like function is produce which shows in subplot(4). Our method utilize this negative mixture density like function as a kernel function smooth function $L$ and gives the gradient for each $\pi$.

\subsection{NGMG and Wasserstein Distance}
 \cite{Arjovsky2017Wasserstein} show that Wasserstein distance in terms of cost function provides better stability and is more sensible for training, especially in neural networks. NGMG is not a likelihood-based function by design. It is close to the Wasserstein distance and shares similar qualities in terms of cost function. In \cite{Arjovsky2017Wasserstein}, they have proven the Wasserstein distance is more sensible than Jensen-Shannon (JS) divergence, The Kullback-Leibler (KL) divergence, and Total variation (TV) distance in several aspects. In this work, we shows that Wasserstein distance can be structured by NGMG and NGMG share the same benefit in terms of cost function. Noticing that In Section 2.3, we have shown that under our GMM expansion system, any distribution $g$ is approximated by a categorical distribution. 
 
 Let $\pi^{\theta},\pi^{(2)}$ be the categorical distribution of $\left\{\pi^\theta_i\right\}_{i\in \mathbb{N}}$ and $\left\{\pi^{(2)}_i\right\}_{i\in \mathbb{N}}$ respectively. Assume that $\pi^{\theta}$ is locally Lipschitz and the expectation of Lipschitz constant $Lipz(\theta,z)$ over $\theta$ and $z$ is less than infinity, $E_{z \sim p}[Lipz(\theta,z)]<+\infty$, with dual norm $\parallel \phi \parallel=\sup_{\parallel z\parallel \leq1} \mid\phi (z)\mid$, the following statements are proven in \cite{Arjovsky2017Wasserstein}:
\begin{enumerate}
    
    \item By the bounded convergence theorem, 
    
    $\mid W_1(\pi^{\theta},\pi^{(2)}) -W_1(\pi^{\theta^\prime},\pi^{(2)})\mid \leq W_1(\pi^{\theta},\pi^{\theta^\prime})\leq Lipz(\theta) \parallel\theta-\theta^\prime\parallel\rightarrow 0, \text{ as } \theta\rightarrow\theta^\prime$
    \item  $W(P_{\pi^\theta},P_{\pi^{(2)}})$ is continuous everywhere and differentiable almost everywhere.
    \item Statements 1-2 are false for the Jensen-Shannon divergence and all the KLs.
    \item The following statements are equivalent
    \begin{itemize}
        \item $\delta(\pi^{\theta}, \pi^{(2)}) \rightarrow 0$ with $\delta$ the total variation distance.
        \item $JS(\pi^{\theta}, \pi^{(2)}) \rightarrow 0$ with $JS$ the Jensen-Shannon divergence.
    \end{itemize}
    \item The following statements are equivalent
    \begin{itemize}
        \item $W(\pi^{\theta}, \pi^{(2)})\rightarrow 0 $.
        \item $\pi^{\theta}\rightarrow \pi^{(2)}$ where $\rightarrow$ represents convergence in variables.
    \end{itemize}
    \item $KL(\pi^{\theta}||\pi^{(2)}) \rightarrow 0$ or $KL(\pi^{(2)}||\pi^{\theta}) \rightarrow 0$ imply the statements in $4$.
    \item The statements in $5$ imply the statements in $4$.
\end{enumerate}

The statements 1-3 illustrate how the Wasserstein distance is more sensible than the JS divergence, TV distance, and KL divergences with respect to differentiability and continuity. In accordance with the definition of $NGMG$, it is established that $NGMG$ is continuous everywhere and differentiable almost everywhere. Because NGMG Eq.\eqref{ngmgfinal} is a vector of linear transformation of $(\pi^{\theta}-\pi^{(2)})^-$ and all elements in the vector are strictly positive, so that $\parallel ML(\pi^{\theta},\pi^{(2)}) \parallel_1- \parallel ML(\pi^{\theta^\prime},\pi^{(2)}) \parallel_1=\parallel ML(\pi^{\theta},\pi^{\theta^\prime})\parallel_1$. It proves that:

\begin{eqnarray}\notag
    \begin{aligned}
        \mid\parallel \text{NGMG}(L(\pi^{\theta},\pi^{(2)})) \parallel_1- \parallel \text{NGMG}(L(\pi^{\theta^\prime},\pi^{(2)})) \parallel_1\mid &=\parallel \text{NGMG}(L(\pi^{\theta},\pi^{\theta^\prime})) \parallel_1\\
        &\leq Lipz(\theta,z)\parallel \theta-\theta^\prime\parallel,
    \end{aligned}
\end{eqnarray}
% \begin{equation} \notag
%     \mid\parallel \text{NGMG}(L(\pi^{\theta},\pi^{(2)})) \parallel- \parallel_1 \text{NGMG}(L(\pi^{\theta^\prime},\pi^{(2)})) \parallel_1\mid=\parallel \text{NGMG}(L(\pi^{\theta},\pi^{\theta^\prime})) \parallel_1\leq Lipz(\theta,z)\parallel \theta-\theta^\prime\parallel,
% \end{equation} 

\begin{proposition}
The NGMG is a linear function related to Wasserstein distance, which has the following representations:
\begin{equation}\label{WandNGMG}
W_1(\pi^{\theta},\pi^{(2)})=\mid \parallel FM^{-1} \text{NGMG}(L^+(\pi^{\theta},\pi^{(2)}))\parallel_1 \\
-\parallel FM^{-1} \text{NGMG}(L^-(\pi^{\theta},\pi^{(2)}))\parallel_1 \mid
\end{equation}
\end{proposition}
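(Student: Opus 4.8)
The plan is to reduce the claimed identity \eqref{WandNGMG} directly to Proposition~\ref{proposition1} by unwinding the definitions of $L^{+}$, $L^{-}$ and the $\mathrm{NGMG}$ operator, and then cancelling the kernel matrix $M$ against its inverse. First I would fix notation consistent with \eqref{functionL} and Proposition~\ref{proposition1}: set $\Delta=\pi^{\theta}-\pi^{(2)}$ and let $B$ be the diagonal $\{0,1\}$ selector matrix of Proposition~\ref{proposition1}. I read $L^{-}$ as the negative part from \eqref{functionL}, i.e.\ $L^{-}=(I-B)\Delta$, and $L^{+}$ as the complementary positive part $L^{+}=B\Delta$, so that $L^{+}+L^{-}=\Delta$ (stating this reading explicitly, since the proposition reuses the symbols without redefining them). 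By \eqref{ngmgfinal}, $\mathrm{NGMG}(L^{\pm})=-M L^{\pm}$ with $M=M(\sigma)$ the hollow Gaussian-kernel matrix, and $F$ is the diagonal matrix of \eqref{gmmwasdist2}.

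Assuming $M$ is invertible, the computation is immediate: $M^{-1}\mathrm{NGMG}(L^{\pm})=-L^{\pm}$, hence $FM^{-1}\mathrm{NGMG}(L^{+})=-FB\Delta$ and $FM^{-1}\mathrm{NGMG}(L^{-})=-F(I-B)\Delta$. Since $\parallel\cdot\parallel_{1}$ discards the overall minus sign, the right-hand side of \eqref{WandNGMG} collapses to $\bigl\lvert\,\parallel FB\Delta\parallel_{1}-\parallel F(I-B)\Delta\parallel_{1}\,\bigr\rvert$, which is exactly the closed form for $W_{1}(\pi^{\theta},\pi^{(2)})$ supplied by Proposition~\ref{proposition1} and \eqref{gmmwasdist2}. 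This closes the argument, and it also makes the phrasing of the statement precise: $\mathrm{NGMG}$ differs from the signed Wasserstein building blocks only through the fixed linear map $FM^{-1}$, so it is an invertible linear transform of them.

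The one genuinely nontrivial point — the step I expect to be the main obstacle — is justifying that $M^{-1}$ exists, since $M$ is a symmetric Gaussian-kernel matrix with its diagonal zeroed out, and hollowing a positive-definite matrix need not preserve nonsingularity. I would dispatch this by observing that in the GMM-expansion setup every diagonal entry of the full kernel $K$ equals the same constant $c=\phi(0;0,\sigma)$ (all component variances are equal), so $M=K-cI$; since $K$ is symmetric positive definite, $M$ is singular only if $c$ is an eigenvalue of $K$, which fails for generic and sufficiently separated means $\{\mu_{i}\}$ — in particular under the spacing condition (the distance between consecutive $\mu_{i}$ larger than $\sigma$) used in Section~3. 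Absent a clean fully general argument, one may instead adopt invertibility of $M$ as a standing hypothesis, which is already implicit in the statement's use of $M^{-1}$.
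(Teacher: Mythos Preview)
Your argument is correct and matches the paper's own proof almost exactly: the paper also defines $L^{+}$ and $L^{-}$ via the selector $B$, invokes $\mathrm{NGMG}(L)=-ML$ to cancel $M^{-1}$, and reads off the identity from Proposition~\ref{proposition1}/Eq.~\eqref{gmmwasdist2} (the only cosmetic difference is that the paper sets $L^{+}=-B\Delta$ rather than $B\Delta$, which you correctly neutralize via the $\ell_{1}$ norm). Your discussion of the invertibility of $M$ is in fact more careful than the paper, which uses $M^{-1}$ without comment.
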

\begin{proof}
 Based on Eq.\eqref{gmmwasdist2}, we have:
\begin{equation*}
\overrightarrow{W_1}(\pi^{\theta},\pi^{(2)})^+=FB\cdot(\pi^{\theta}-\pi^{(2)}), ~~\overrightarrow{W_1}(\pi^{\theta},\pi^{(2)})^-=F(1-B)\cdot(\pi^{\theta}-\pi^{(2)})    
\end{equation*}
Let $L^+(\pi^{\theta},\pi^{(2)})=-B\cdot(\pi^{\theta}-\pi^{(2)})$ and $ L^-(\pi^{\theta},\pi^{(2)})=(1-B)\cdot(\pi^{\theta}-\pi^{(2)})$.
Then we have
\begin{equation*}
\overrightarrow{W_1}(\pi^{\theta},\pi^{(2)})^+=FM^{-1}\text{NGMG}(L^+(\pi^{\theta},\pi^{(2)}))),
\end{equation*}
\begin{equation*}
    \overrightarrow{W_1}(\pi^{\theta},\pi^{(2)})^-=FM^{-1}\text{NGMG}(L^-(\pi^{\theta},\pi^{(2)}))).
\end{equation*}
Together with Eq.\eqref{gmmwasdist2}, we proved Eq.\eqref{WandNGMG}.   
\end{proof}

As demonstrated in statements 4-7, convergence in the JS divergence, TV distance, and KL divergence implies convergence in the Wasserstein distance. NGMG exhibits a strong connection with the Wasserstein distance, suggesting that convergence in one implies convergence in the other. Therefore, the convergence properties of the Wasserstein distance can be extended to NGMG, and the reciprocal is also true.
\begin{proposition}
The following statements can be proved equivalent:
\begin{description}
    \item[(a)] $W(\pi^{\theta}, \pi^{(2)})\rightarrow 0 $.
    \item[(b)] $\parallel \text{NGMG}(\pi^{\theta}, \pi^{(2)}) \parallel_1 \rightarrow 0 $.
\end{description}
\end{proposition}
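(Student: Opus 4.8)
The plan is to connect both conditions to convergence of the underlying categorical vectors, $\pi^{\theta}\to\pi^{(2)}$, and then lean on the equivalence already recorded as statement~5 of the list taken from \cite{Arjovsky2017Wasserstein}, namely $W(\pi^{\theta},\pi^{(2)})\to 0 \iff \pi^{\theta}\to\pi^{(2)}$ (valid here because the $\pi$'s are finite categorical distributions on the fixed bounded support $\mathbf{M}$). Given that, it suffices to prove that (b) is itself equivalent to $\pi^{\theta}\to\pi^{(2)}$. Throughout I set $\delta=\pi^{\theta}-\pi^{(2)}$, let $L=L(\pi^{\theta},\pi^{(2)})$ be its negative part as in Eq.~\eqref{functionL}, and recall from Eq.~\eqref{ngmgfinal} that $\text{NGMG}(\pi^{\theta},\pi^{(2)})=-ML$, so that $\|\text{NGMG}(\pi^{\theta},\pi^{(2)})\|_1=\|ML\|_1$.

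The easy direction first: if $\pi^{\theta}\to\pi^{(2)}$ then $\delta\to 0$ (all norms on $\mathbb{R}^{n}$ are equivalent); coordinatewise $L_i=\min(\delta_i,0)$ is $1$-Lipschitz in $\delta_i$, hence $L\to 0$, and since $L\mapsto -ML$ is linear, $\|ML\|_1\to 0$, which is (b). Combined with statement~5 this already yields (a)$\Rightarrow$(b), and it uses nothing about $M$ beyond boundedness.

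For the converse, assume (b), i.e.\ $\|ML\|_1\to 0$. Using that $M$ is invertible --- the same fact implicitly required for $M^{-1}$ to appear in the preceding proposition, Eq.~\eqref{WandNGMG} --- we have $L=-M^{-1}\text{NGMG}(\pi^{\theta},\pi^{(2)})$, hence $\|L\|_1\to 0$. It then remains to upgrade ``the negative part of $\delta$ vanishes'' to ``$\delta$ vanishes'', which is exactly where probability normalization enters: since $\pi^{\theta}$ and $\pi^{(2)}$ both sum to $1$, $\sum_i\delta_i=0$, so writing $\delta_i=P_i-|L_i|$ with $P_i=\max(\delta_i,0)$ supported off the support of $L$ gives $\|P\|_1=\sum_iP_i=\sum_i|L_i|=\|L\|_1\to 0$, whence $\|\delta\|_1=\|P\|_1+\|L\|_1\to 0$ and $\pi^{\theta}\to\pi^{(2)}$; statement~5 then gives (a). One can also bypass statement~5 in this direction: Eq.~\eqref{WandNGMG} together with $FM^{-1}\text{NGMG}(L^{\pm})=-FL^{\pm}$ and positivity of the diagonal entries $\int_{\mathbf{M}}F_n\,dx$ of $F$ bounds $W_1(\pi^{\theta},\pi^{(2)})$ by a fixed multiple of $\|L\|_1$, so $W_1\to 0$ directly.

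I expect the real obstacle to be the invertibility of $M$. The matrix $M$ of Eq.~\eqref{ngmgfinal} is the symmetric Gaussian-kernel matrix with its diagonal zeroed, $M=K-cI$ where $K_{ij}=\phi(\mu_i;\mu_j,\sigma)$ is strictly positive definite (the $\mu_i$ being distinct by construction) and $c=\phi(\mu;\mu,\sigma)=(\sqrt{2\pi}\,\sigma)^{-1}$; thus $M$ is invertible exactly when $c$ is not an eigenvalue of $K$, which I would make an explicit hypothesis --- it is tacitly needed already by the preceding proposition and holds for the $\mu$-spacing and $\sigma$ fixed by the GMM-expansion setup (in particular for $\sigma$ not too small). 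The other point worth stating plainly is the normalization subtlety above: $\text{NGMG}$ depends only on the negative part of $\delta$, so on its own it cannot detect the coordinates where mass was added, and it is precisely the constraint $\sum_i\delta_i=0$ that restores injectivity and makes (b)$\Rightarrow$(a) correct. Everything else --- continuity of coordinatewise $\min/\max$, linearity of $L\mapsto ML$, equivalence of finite-dimensional norms, positivity of the diagonal of $F$ --- is routine.
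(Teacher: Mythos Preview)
Your argument is correct, and the overall shape --- reducing both (a) and (b) to $\pi^{\theta}\to\pi^{(2)}$ --- matches the paper's. The technical routes inside the two sub-steps differ. For (b)$\Rightarrow$(a) you invoke invertibility of $M$ to pass from $ML\to 0$ to $L\to 0$; the paper instead uses signs: every off-diagonal entry $M_{ji}=\phi_i(\mu_j)$ is strictly positive and every $L_i\le 0$, so $(ML)_j=\sum_{i\ne j}M_{ji}L_i$ is a sum of nonpositive terms and its vanishing forces $L_i=0$ for all $i\ne j$ (quantitatively $\|ML\|_1\ge (n-1)\min_{i\ne j}M_{ji}\,\|L\|_1$, which handles convergence). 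That sidesteps entirely the spectral hypothesis on $M=K-cI$ you flag. For (a)$\Rightarrow$(b) you route through statement~5 and the continuity of $\delta\mapsto\min(\delta,0)$, which is clean and modular; the paper argues directly from the integral representation Eq.~\eqref{w1} and the strict ordering $\int_{\mathbf M}F_1>\int_{\mathbf M}F_2>\cdots>0$ to conclude $\pi_n^{\theta}-\pi_n^{(2)}=0$. Your explicit isolation of the normalization step --- using $\sum_i\delta_i=0$ to lift $L\to 0$ to $\delta\to 0$ --- is left implicit in the paper.
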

\begin{proof} 
From \textbf{(b)} to \textbf{(a)}:
If $\parallel \text{NGMG}(L^-(\pi^{\theta},\pi^{(2)}))\parallel_1 \rightarrow 0 $,

\begin{eqnarray*}
        \sum_j^n \mid \text{ngmg}(\pi^{\theta}_j,\pi^{(2)}_j)\mid=0. 
\end{eqnarray*}
And because $\mid \text{ngmg}(\pi^{\theta}_j,\pi^{(2)}_j) \mid \geq 0$, for all $j$,
$$\pi^{\theta}_j-\pi^{(2)}_j=0,\quad W(\pi^{\theta},\pi^{(2)})=0.$$
Vice versa for
$$\parallel \text{NGMG}(L^+(\pi^{\theta},\pi^{(2)}))\parallel_1 \rightarrow0.$$

From \textbf{(a)} to \textbf{(b)}:

If $W(\pi^{\theta},\pi^{(2)})\rightarrow0$, from Jensen Inequality:
\begin{eqnarray*}
    \begin{aligned}
    W(\pi^{\theta},\pi^{(2)})&=\int_{\mathbf{M}}\mid \sum_{n=1}^N (\pi_n^{\theta}-\pi_n^{(2)}) F_n(x)\mid dx \\
    &\geq \mid\sum_{n=1}^N (\pi_n^{\theta}-\pi_n^{(2)})\int_{\mathbf{M}}  F_n(x) dx\mid=0.
    \end{aligned}
\end{eqnarray*}
Based on GMM expansion setting
       $$ \int_{\mathbf{M}}  F_n(x) dx =A_n, A_1>A_2>A_3....>0.$$
We have 
$$ \pi_n^{\theta}-\pi_n^{(2)}=0, \quad n \in [1,N],$$
and
$$\text{NGMG}(L^-(\pi_n^{\theta},\pi_n^{(2)}))=\text{NGMG}(L^+(\pi_n^{\theta},\pi_n^{(2)}))=0.$$
\end{proof}

From the analysis presented, we can draw the following conclusion: Under the assumptions outlined in statement 2, we establish that $NGMG$ possesses the same desirable properties as the Wasserstein distance. Specifically, $NGMG$ is continuous and differentiable almost everywhere. Assuming convergence criteria are met for the Kullback-Leibler divergence, Total Variation distance, Jensen-Shannon distance, and Wasserstein distance, $NGMG$ is also shown to converge. Both $NGMG$ and the Wasserstein distance prove to be sensible cost functions, particularly for achieving convergence on low-dimensional manifolds where KL divergence, TV distance, and JS distance may fail to perform adequately.
\subsection{Numerical Experiments}

We ran various experiments to test our learning method. The first experiment involved learning to transform distribution A into distribution B. The second was a direct comparison of our method with binary cross-entropy in neural network training.

\subsubsection{Learning Density}
Moving between two distributions involves parametrization. If \( A \) is a Gaussian distribution but \( B \) is not, it is impossible to perfectly transform \( A \) into \( B \) due to the limited geometry of \( A \). In our approach, the density is approximated by a mixture of Gaussians with fixed means and variances. As shown in Section~2.3, the Wasserstein distance between any two distributions under Gaussian mixture expansion is given by Eq.~\eqref{gmmwasdist}. In this scenario, the parametrization of our model relates solely to the mixture coefficients \( \boldsymbol{\pi} \), and NGMG is applied to adjust these \( \boldsymbol{\pi} \)s.

\begin{algorithm}[h]
	\caption{Simple Learning Algorithm}
	\label{alg:2}
	\BlankLine
        \textbf{Requires: }$\sigma$, $\left\{\mu_i\right\}_{i=1}^N$, $\epsilon$,  $\pi$ is calculated using Gaussian mixture expansion method in Section 2.2. 

        \textbf{Initialise: }$\widehat{\pi}$
        
	\textbf{While }{$L>\epsilon $ } \textbf{do}
 
          \textbf{            }1.Calculate $\text{NGMG}(L(\pi,\widehat{\pi}),\sigma)$

          \textbf{            }2.Update $\widehat{\pi}=\widehat{\pi}+\text{NGMG}(L(\pi,\widehat{\pi}),\widehat{\pi},\sigma)$

          \textbf{            }3.Normalise$\widehat{\pi}$.
       
       \textbf{return}  $\widehat{\pi}$
\end{algorithm}

\begin{figure}[ht]
	\centering
		\includegraphics[scale=0.45]{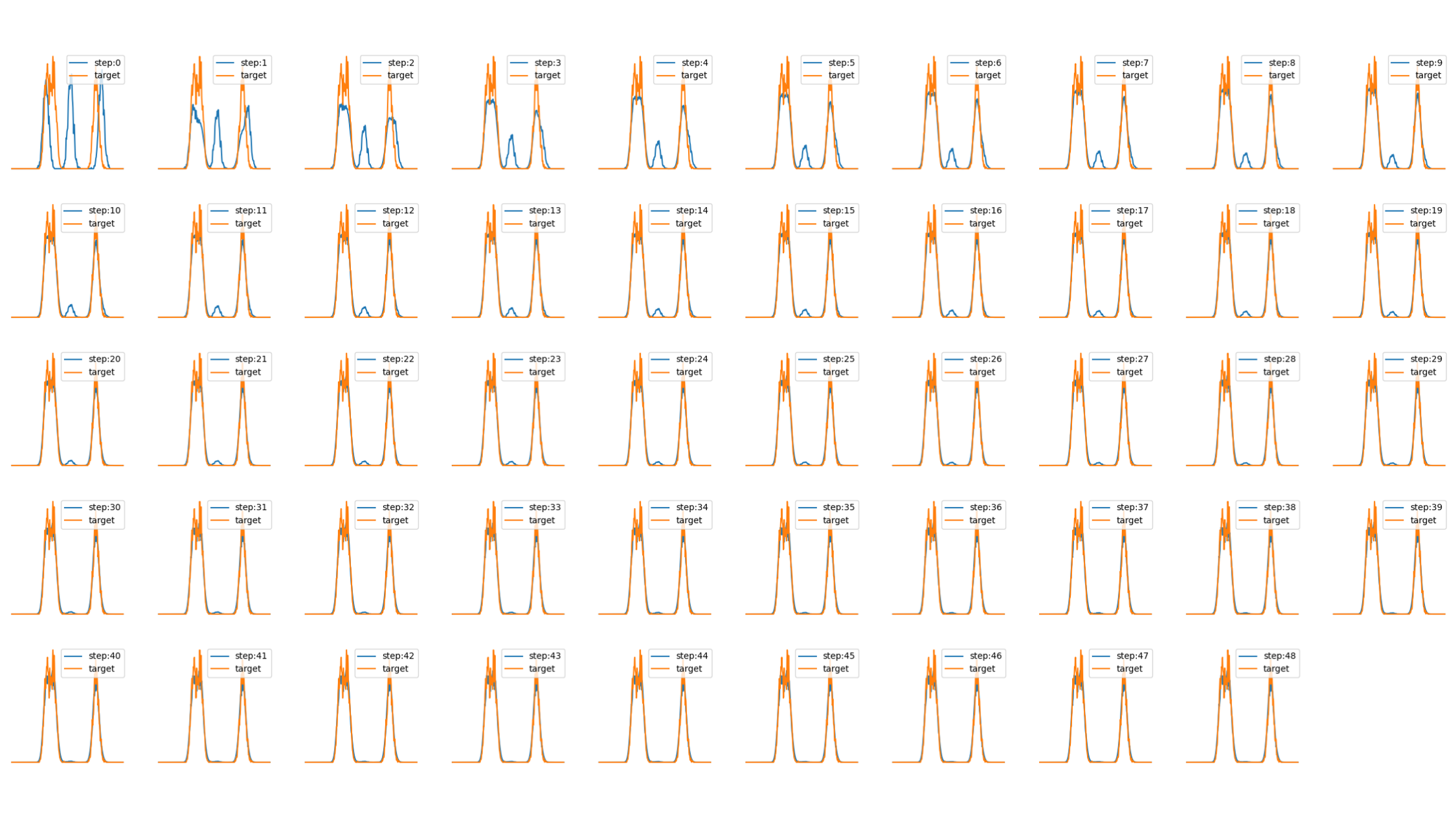}
	\caption{Learning two distribution by NGMG. This shows each learning iteration in algorithm.\ref{alg:2}. Orange line is the target $\pi$ and blue line is the initial $\widehat{\pi}$. }
	\label{learnsteps}
\end{figure}

Figure~\ref{learnsteps} shows the learning process using NGMG. The orange curve represents the target \( \boldsymbol{\pi} \), and the blue curve represents \( \widehat{\boldsymbol{\pi}} \). After 10 steps, \( \widehat{\boldsymbol{\pi}} \) is already relatively close to the target \( \boldsymbol{\pi} \). Because the loss function \( L \) decreases, the learning process in later steps becomes less aggressive. The parameter \( \sigma \) is included in the equation because it can be parameterized during training to provide more control. Figure~\ref{learnsteps} showcases an example with a constant \( \sigma \). We can also develop a learning scheme that updates \( \sigma \) based on certain criteria. The parameter \( \sigma \) controls the modified covariance matrix \( M \). Compared to other cost functions, NGMG has the distinct feature of allowing more control over the learning process.

\subsubsection{Learning in Neural Network}

NGMG is a function that provides a gradient for learning. In the aforementioned density learning experiment, normalization is applied to ensure the probability condition, such that the sum of \( \boldsymbol{\pi} \)s equals one. However, in a neural network, normalization at each training step is not feasible. Instead of training with binary cross-entropy, we introduce negative Gaussian mixture gradient (NGMG) entropy. Intuitively, just as the Earth Mover's (Wasserstein) distance moves the earth, NGMG here moves the entropy.

Shannon entropy:
\begin{equation}\label{entropyShanon}
    H(x)=-\sum_{x\in\mathcal{X}} p(x) \log(p_\theta(x)).
\end{equation}

% \begin{equation}
%     H(x)=-p(x) \log(p(x)).
% \end{equation}

Binary cross-entropy:

\begin{equation}\label{entropyBC}
    H(x)=- (p(x) \log(p_\theta(x))+(1-p(x))\log(1-p_\theta(x))).
\end{equation}

% \begin{equation}
%     H(x)=-\frac{1}{N}\sum_i^N( x_i \log(p(x_i))+(1-x_i)\log(1-p(x_i))).
% \end{equation}

\textbf{Negative Gaussian mixture gradient entropy}:

\begin{equation}\label{entropyNGMG}
   H(x)=-\parallel\text{NGMG}(L(p(x),p_{\theta}(x)),\sigma)\cdot \log(p_{\theta}(x))\parallel_1. 
\end{equation}

\begin{figure}[ht]
	\centering
		\includegraphics[scale=0.45]{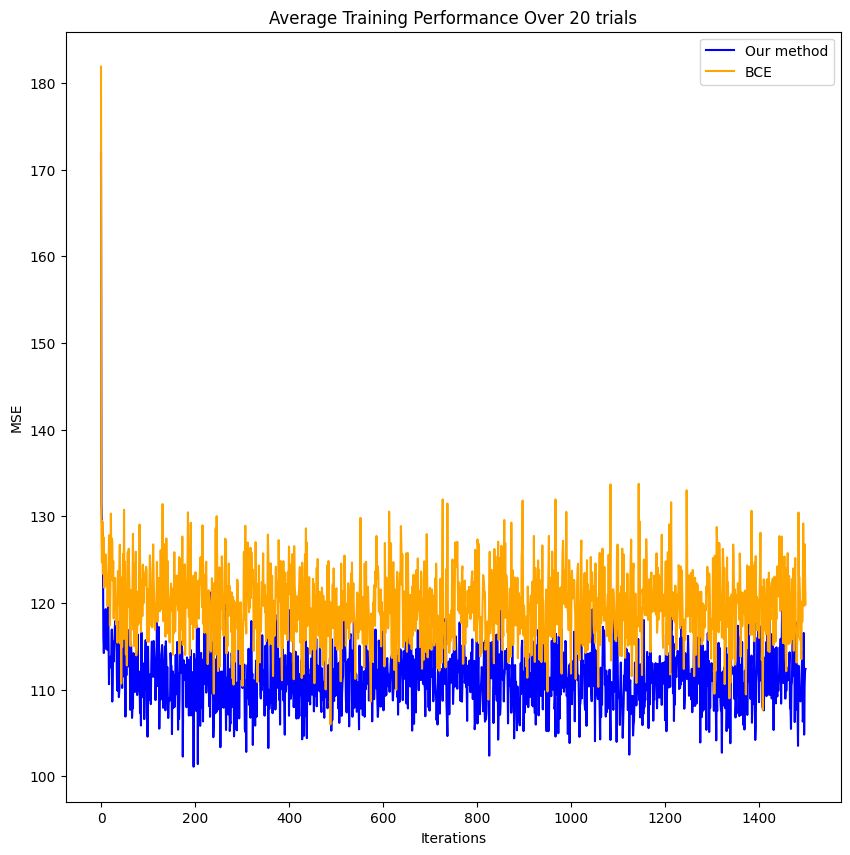}
	\caption{NGMG Entropy vs. BCE. Orange line represent the average training loss of model trained in BCE. Blue line is our result. MSE metric is applied to compare the learning result directly. Results are averaged over 20 trials. Every trial trained with 1500 iterations. }
	\label{NGNGVSBCE}
\end{figure}

In this experiment, a simple neural network with a single input is tested to fit the feature attributes of the CelebA dataset. Figure~\ref{NGNGVSBCE} shows the sum of squared errors between the predicted probabilities and feature attributes in a batch on the test dataset. The blue line represents the performance of NGMG entropy, and the orange line represents the performance of binary cross-entropy. NGMG entropy clearly shows better performance than binary cross-entropy. Equations~\eqref{entropyShanon}, \eqref{entropyNGMG}, and \eqref{entropyBC} indicate that our method combines Wasserstein distance and Shannon entropy, with NGMG used to control the entropy term \( \log(p_{\theta}(x)) \).

\section{Conclusions}

We have presented a distributional conditioning mechanism wherein latent variables are treated as random variables. A Gaussian mixture model is used to construct the latent distribution. The data (real) distribution is approximated by the latent distribution, and a diffusion model is trained on the CelebA dataset to demonstrate this conditioning mechanism. Our generation results are promising and indicate potential for further development. Conditioning on features or classes can significantly affect model performance. We theoretically and experimentally show that conditioning the model on features produces fewer defective generations than conditioning on classes. Additionally, we present a diffusion model with a classifier and propose a novel distance function, Negative Gaussian Mixture Gradient (NGMG), to train this diffusion model. We prove that NGMG shares the same benefits as the Wasserstein distance. It is more sensitive than KL-divergence, Jensen-Shannon divergence, and Total Variation when learning distributions supported by low-dimensional manifolds. Our experiments demonstrate favorable results compared to binary cross-entropy.
\section*{Acknowledgement}
This work was partly supported by NSFC grant 12141107.
\newpage

\appendix 
\section{Sheets Of Samples}\label{Appendix1}
% \textbf{Sheets Of Samples}

\begin{figure}[ht]
	\centering
		\includegraphics[scale=1.0]{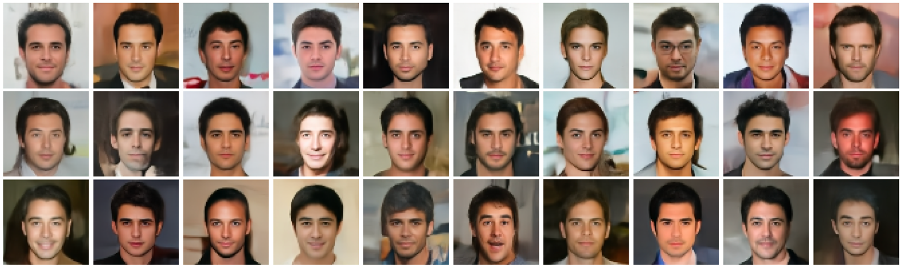}
	\caption{ Random samples condition on: 
Bushy Eyebrows, Male, Pointy Nose, Straight Hair, Young, Attractive...}
\end{figure}
\begin{figure}[ht]
	\centering
		\includegraphics[scale=1.0]{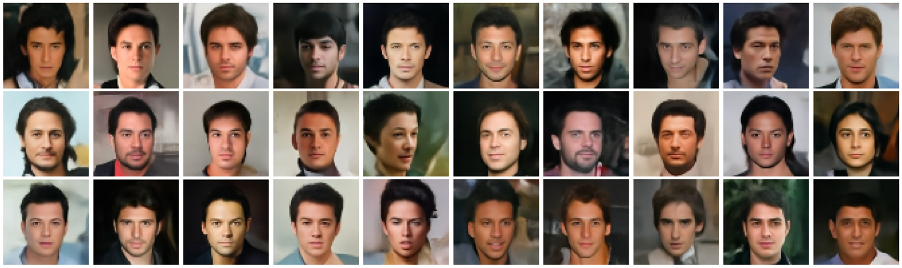}
	\caption{Random samples condition on: 
5 o'Clock Shadow, Arched Eyebrows, Bags Under Eyes, Male, Black Hair… }
\end{figure}
\begin{figure}[ht]
	\centering
		\includegraphics[scale=1.0]{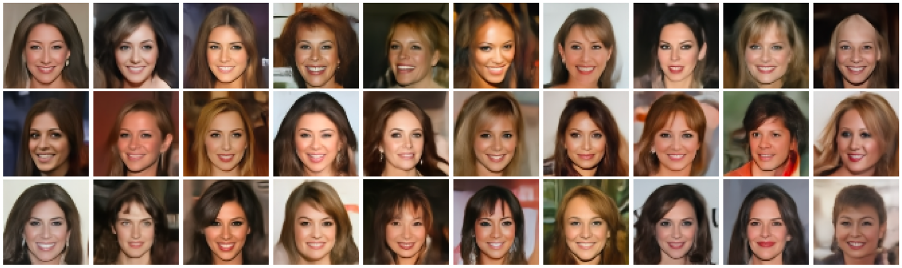}
	\caption{Random samples condition on: 
Arched Eyebrows, Attractive, Bangs, Big Lips, Heavy Makeup, High Cheekbones, …}
	\label{}
\end{figure}
\begin{figure}[ht]
	\centering
		\includegraphics[scale=1.0]{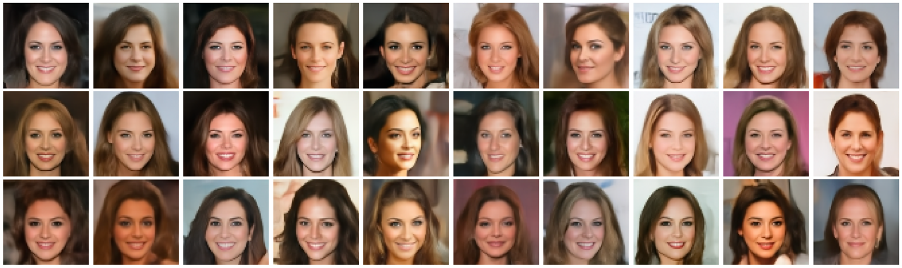}
	\caption{Random samples condition on: 
Arched Eyebrows, Attractive, Brown Hair, Heavy Makeup, Smiling, Young … }
\end{figure}

\newpage

\section{Diffusion Model Sampling}\label{Appendix2}

In our experiments, a simple strategy for forward diffusion process as well as denoising process is applied. Our notation follows \cite{Ho2020Denoising}.

\begin{eqnarray*}
    \alpha_t=1-\beta_t,\\
    \overline{\alpha}_t:= \prod_{s=1}^{t}\alpha_s,\\
\end{eqnarray*}
For $\epsilon \sim \phi(0,I)$, 
\begin{equation*}
        x_t=\sqrt{\overline{\alpha}_t}x_0+\sqrt{(1-\overline{\alpha}_t)}\epsilon.
\end{equation*}

Sampling process in \cite{Ho2020Denoising} is given by following equation:
\begin{equation*}
    x_{t-1}=\frac{1}{\sqrt{\alpha_t}}\left(x_t- \frac{1-\alpha_t}{\sqrt{1-\overline{\alpha}_t}}\epsilon_\theta\left(x_t,t\right)\right)+\sigma_t z,
\end{equation*}
where $z\sim \phi(0,I)$.
Denoising process is taking the same path that we use in training Eq.\eqref{denoise} which we calculate $x_0$ at every steps and noise up to $x_{t-1}$ autoregressively. 
\begin{eqnarray}\label{denoise}
    \begin{aligned}
        x_0^* &=\frac{x_t-\sqrt{(1-\overline{\alpha})}\epsilon_\theta\left(x_t,t\right)}{\sqrt{\overline{\alpha}}},\\
    x_{t-1}&=\sqrt{\overline{\alpha}}x_0^*+\sqrt{(1-\overline{\alpha})}\epsilon
    \end{aligned}
\end{eqnarray}

Figure .\ref{picx0Sampling} shows the generating result at each steps. Total steps of forward diffusion process in our experiment for CelebA is 100. Backward autoregressive 
denosing process is start at a step 100 with complete Gaussian noise. It shows that our model quickly find our target images and making detail refinement at each sampling steps. The predicted $x_0^*$ from steps 80-100 reach similar generation quality. We apply the same simple loss function for model training\citep{Ho2020Denoising, Rombach2022High}.

\begin{equation*}
    L_{\text{simple}}(\theta):=E_{t,x_0,\epsilon}\left[\parallel \epsilon-\epsilon_\theta(\sqrt{\overline{\alpha}}x_0+\sqrt{(1-\overline{\alpha})}\epsilon,t)\parallel\right]
\end{equation*}

%% The Appendices part is started with the command \appendix;
%% appendix sections are then done as normal sections
%% \appendix

%% \section{}
%% \label{}

%% If you have bibdatabase file and want bibtex to generate the
%% bibitems, please use
%%
%%  \bibliographystyle{elsarticle-harv} 
%%  \bibliography{<your bibdatabase>}

%% else use the following coding to input the bibitems directly in the
%% TeX file.

\end{document}